\definecolor{codegreen}{rgb}{0,0.6,0}
\definecolor{codegray}{rgb}{0.5,0.5,0.5}
\definecolor{codepurple}{rgb}{0.58,0,0.82}
\definecolor{backcolour}{rgb}{0.95,0.95,0.92}
\lstdefinestyle{mystyle}{
    backgroundcolor=\color{backcolour},   
    commentstyle=\color{codegreen},
    keywordstyle=\color{magenta},
    numberstyle=\tiny\color{codegray},
    stringstyle=\color{codepurple},
    basicstyle=\ttfamily\footnotesize,
    breakatwhitespace=false,         
    breaklines=true,                 
    captionpos=b,                    
    keepspaces=true,                 
    numbers=left,                    
    numbersep=5pt,                  
    showspaces=false,                
    showstringspaces=false,
    showtabs=false,                  
    tabsize=4
}
\newcommand{\divg}{\mathbb{D}}
\newcommand{\repp}{p} 
\newcommand{\rep}{h}
\newcommand{\bce}{\mathrm{BCE}} 
\theoremstyle{plain}
\newtheorem{theorem}{Theorem}[section]
\newtheorem{proposition}[theorem]{Proposition}
\theoremstyle{definition}
\newtheorem{definition}[theorem]{Definition}
\theoremstyle{remark}
\icmltitlerunning{Latent Adversarial Regularization for Offline Preference Optimization}
\begin{document}

\twocolumn[
  \icmltitle{Latent Adversarial Regularization for Offline Preference Optimization}



    



  \icmlsetsymbol{equal}{*}

  \begin{icmlauthorlist}
    \icmlauthor{Enyi Jiang}{stanford,uiuc}
    \icmlauthor{Yibo Jacky Zhang}{stanford}
    \icmlauthor{Yinglun Xu}{uiuc}
    \icmlauthor{Andreas Haupt}{stanford}
    \icmlauthor{Nancy Amato}{uiuc}
    \icmlauthor{Sanmi Koyejo}{stanford}
  \end{icmlauthorlist}

  \icmlaffiliation{stanford}{Department of Computer Science, Stanford University}
  \icmlaffiliation{uiuc}{Siebel School of Computing and Data Science, University of Illinois at Urbana-Champaign}

  \icmlcorrespondingauthor{Enyi Jiang}{enyij2@illinois.edu, enyij@stanford.edu}

  \icmlkeywords{Machine Learning, ICML}

  \vskip 0.3in
]



\printAffiliationsAndNotice{}  

\begin{abstract}
  Learning from human feedback typically relies on preference optimization that constrains policy updates through token-level regularization. However, preference optimization for language models is particularly challenging because token-space similarity does not imply semantic or behavioral similarity. To address this challenge, we leverage latent-space regularization for language model preference optimization. We introduce GANPO, which achieves latent-space regularization by penalizing divergence between the internal representations of a policy model and a reference model. Given that latent representations are not associated with explicit probability densities, we adopt an adversarial approach inspired by GANs to minimize latent-space divergence. We integrate GANPO as a regularizer into existing offline preference optimization objectives. Experiments across multiple model architectures and tasks show consistent improvements from latent-space regularization. Further, by comparing GANPO-induced inferential biases with those from token-level regularization, we find that GANPO provides more robust structural feedback under distributional shift and noise while maintaining comparable downstream performance with minor computational overhead. Our code is available at~\url{https://github.com/enyijiang/GANPO}.

\end{abstract}

\section{Introduction}
\begin{figure*}
    \centering
    \includegraphics[width=0.95\linewidth]{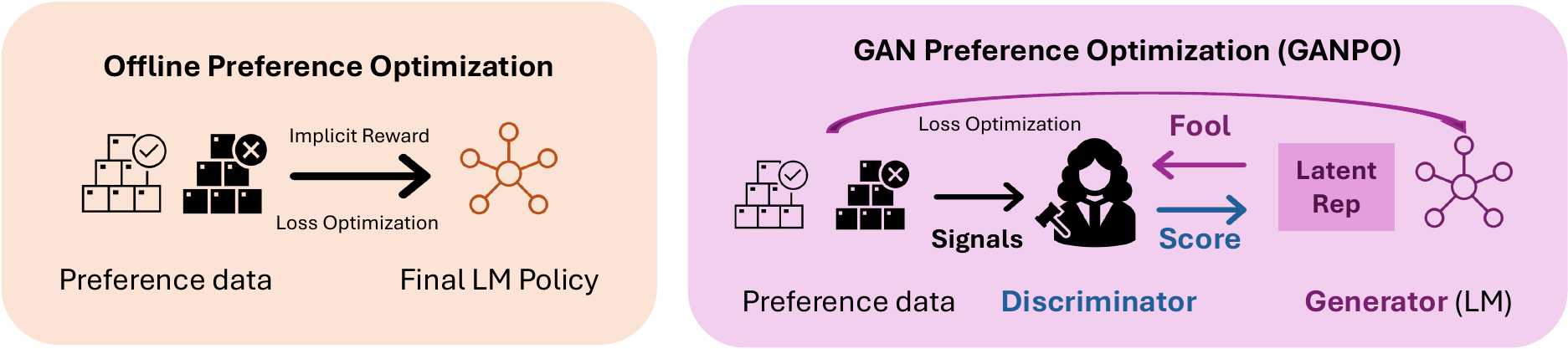}
    \caption{\textbf{Comparison between DPO and GANPO.}  
    Offline preference optimization methods (e.g., DPO) optimize an implicit reward defined by preference data. GANPO augments this objective with a latent-space discriminator, whose adversarial interaction induces a regularization between the latent representation distributions of the policy model and the reference model. }
    \label{fig:GANPO}
\end{figure*}

Learning from human feedback is essential for aligning large language models (LLMs) with human preferences~\cite{leike2018scalable,Ouyang2022TrainingLM}. Most modern methods are done via pairwise preference optimization (PO): given pairwise preference data, we update a policy model $\pi_\theta$ while constraining it to remain close to a reference model $\pi_{\text{ref}}$. This regularization constraint is crucial for improving generalization and reducing reward hacking, and it is most commonly implemented via a KL regularizer~\cite{Ouyang2022TrainingLM, rafailov2023dpo, xiong2024iterative}. Recent work has also explored alternative divergence measures, e.g., $\chi^2$-divergence and other $f$-divergences~\cite{wang2023beyond, huang2025correcting}.

\begin{figure}[!h]
    \centering
    \includegraphics[width=\linewidth]{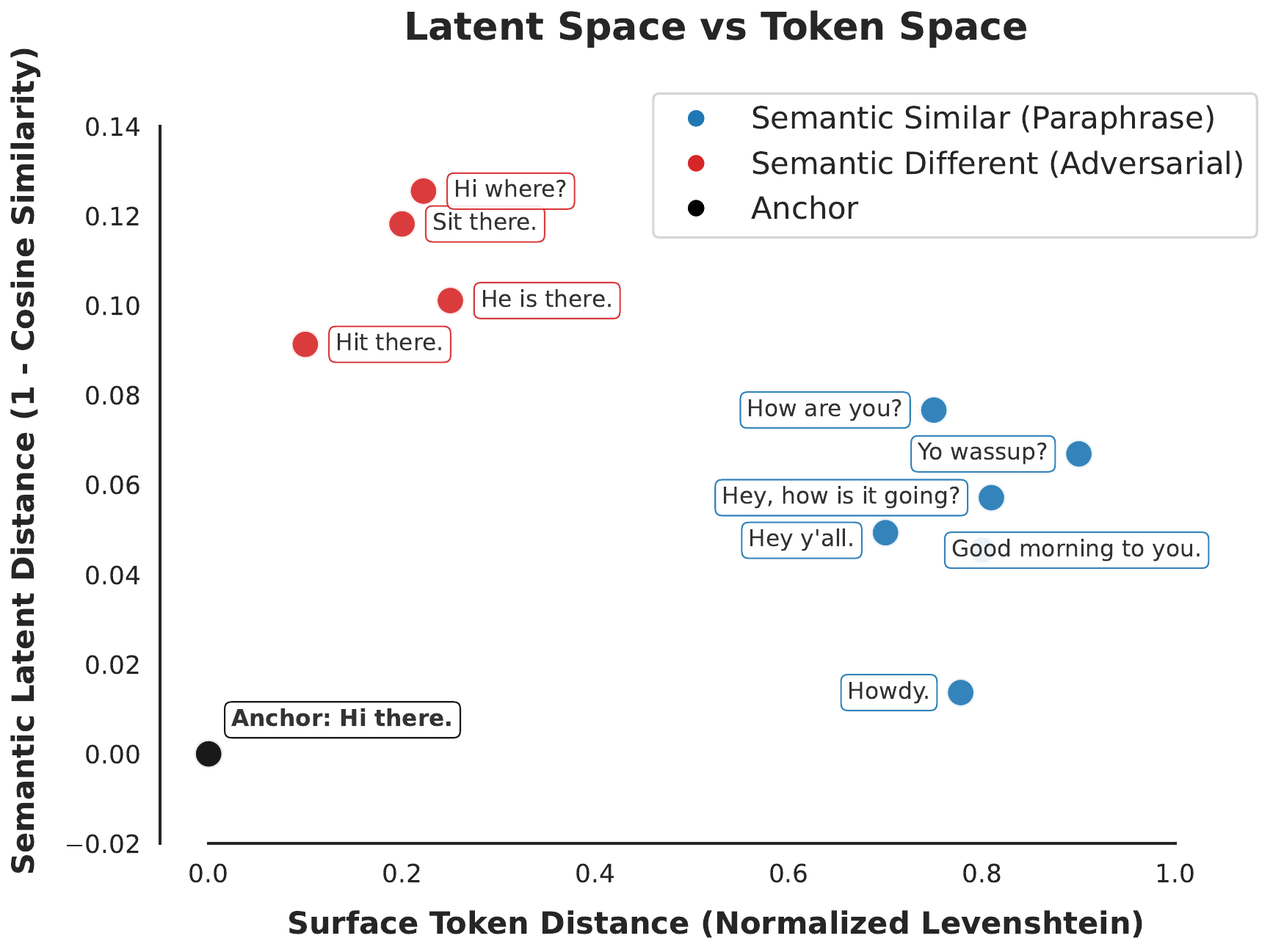}
    \caption{\textbf{Latent space vs token space.} Anchor (``Hi there.'') is the reference point for distance measurements. Semantically similar paraphrases exhibit large token-level variation yet remain close in latent space, while semantically different phrases show smaller token changes but larger latent space differences.}
    \label{fig:insight-latent-space}
    \vspace{-0.25in}
\end{figure}

However, a limitation of these regularizers is that they operate purely in token space. Two sentences, for example, may be distant under token-level divergences while remaining semantically very similar (``Hi there" vs ``Good morning to you") and vice versa (``Hi there" vs ``Hit there"), as illustrated in Figure~\ref{fig:insight-latent-space}. As a result, token-space divergences can be a coarse proxy for the actual behavioral similarity. Motivated by the limitation of token space optimization, recent work begun to explore the use of latent spaces instead of token spaces. For example, studies on continuous latent reasoning suggest that optimization in latent representations can lead to improved reasoning capabilities~\cite{hao2025training, zhu2025reasoning} compared to methods that purely operate in token space.

This paper investigates regularization performed in the latent space: we show that latent regularization can provide structural alignment feedback, which is absent from token-level constraints. Concretely, given a preference dataset, we consider the distributions of internal representations produced by the policy model and the reference model, and penalize their divergence. Latent representations are lower-dimensional than token distributions and often encode dense, structured information about semantics and reasoning state. This makes latent regularization a potentially better way for semantic alignment than explicit token-level constraints.

The immediate challenge is that, unlike token probabilities, latent representations do not admit an explicit probability density, making standard divergence measures a computational challenge. To address this, we adopt a technique inspired by GANs~\cite{goodfellow2014generative}: we introduce discriminators that distinguish representations generated by the policy from those generated by the reference model. We show that optimizing the policy against such discriminators is equivalent to minimizing a latent-space divergence, yielding an efficient adversarial regularizer that can be added to existing preference optimization objectives. To fully leverage paired preference signals, we move beyond the standard binary GAN formulation and introduce a quad representation framework. Specifically, we design a contrastive objective by training two discriminators that jointly distinguish high-quality and low-quality representations, while retaining the original offline preference optimization objective. This formulation enables the policy model to learn from pairwise preference dataset while receiving dense structural feedback through latent-space optimization. In this work, we focus on offline preference optimization (OPO)-style methods. Our contributions are summarized as follows:

\begin{itemize}[nosep, leftmargin=15pt]
\item We propose \textbf{GANPO} (generative adversarial network preference optimization), the first {latent-space regularization} for language-model preference optimization, which introduces an efficient {plug-and-play} adversarial regularizer that can be added to existing preference optimization objectives.
\item Experiments across diverse model architectures and tasks demonstrate consistent improvements by plugging GANPO into OPO-style methods on AlpacaEval-2.0.
\item We conduct extensive experimental studies demonstrating that GANPO better preserves the geometry of the model’s internal representation space, yielding improved robustness to stochastic sampling noise and distributional shifts. Moreover, GANPO maintains comparable performance on downstream tasks with modest additional computational overhead.
\end{itemize}

\section{Preliminaries}
\subsection{Preference Optimization}
In modern language-model alignment, preference learning is commonly instantiated through reinforcement learning from human feedback (RLHF)~\citep{christiano2017deep,Ouyang2022TrainingLM} by optimizing a KL-regularized reward objective. For a parameter $\beta > 0$, this objective is given by
\begin{align}
    \max_{\pi_\theta}\ r(\pi_\theta)-\beta\cdot \divg_{\mathrm{KL}}(\pi_\theta \, \| \, \pi_{\mathrm{ref}}), \label{eq:rlhf-obj}
\end{align}
where $ r(\pi_\theta):=\mathbb{E}_{x\sim \mathcal{D}, y\sim \pi_\theta(\cdot\mid x)}[r(x, y)]$ denotes the expected reward of the policy over the data distribution $\mathcal D$, and $\divg_{\mathrm{KL}}(\pi_\theta \, \| \, \pi_{\mathrm{ref}}):=\mathbb{E}_{x\sim \mathcal D}[\divg_{\mathrm{KL}}(\pi_\theta(\cdot \| x) \, \| \, \pi_{\mathrm{ref}}(\cdot \| x))]$ is the expected KL between the policy and a reference distribution.

In many practical settings, the available supervision is offline and pairwise: a dataset $\mathcal{D}$ consisting of preference triplets $(x, y_w, y_l)$, representing a prompt $x$, a chosen response $y_w$, and a rejected response $y_l$. Preference optimization methods working with such offline datasets are referred to as offline preference optimization (OPO).

A standard approach of OPO is Direct Preference Optimization (DPO)~\citep{rafailov2023dpo}. 
DPO uses the analytical solution to the KL-regularized RLHF objective (equation~\ref{eq:rlhf-obj}) to remove the explicit reward function. Instead, the reward $r(x,y)$ is implicitly reparameterized in terms of the optimal policy $\pi_\theta$ and the reference policy $\pi_{\text{ref}}$.
To model human preferences, DPO incorporates this formulation into the Bradley-Terry model~\citep{bradley1952rank}, resulting in a tractable objective that directly optimizes the policy:
\begin{align}
    &\mathcal{L}_{\text{DPO}}(\pi_\theta; \pi_{\text{ref}}) = -\mathbb{E}_{(x, y_w, y_l) \sim \mathcal{D}} \\&\left[ \log \sigma \left( \beta \log \frac{\pi_\theta(y_w \mid x)}{\pi_{\text{ref}}(y_w \mid x)} - \beta \log \frac{\pi_\theta(y_l \mid x)}{\pi_{\text{ref}}(y_l \mid x)} \right) \right],
    \label{eq:dpo_loss}
\end{align}
where $\sigma(\cdot)$ stands for the sigmoid function. DPO implicitly incorporates the KL regularization with strength $\beta>0$.

\subsection{Generative Adversarial Networks}
Generative Adversarial Networks (GANs)~\citep{goodfellow2014generative} is a framework for learning generative models through an adversarial training process. This framework formulates a minimax, two-player game between two components: a generator $\pi_{G}$ and a discriminator $D$. The generator $\pi_G$ aims to produce samples similar to those from the real distribution $\pi_{\text{real}}$. The discriminator $D(x)\in (0, 1)$ is trained to distinguish between real samples drawn from the data distribution $\pi_{\text{real}}$ and fake samples produced by the generator $\pi_G$.
The adversarial interaction between these two models is formalized as the following minimax optimization problem in standard GANs:
\begin{equation}
    \min_{G} \max_{D}
\mathbb{E}_{ {x} \sim \pi_{\text{real}}}[\log D( {x})]
+ \mathbb{E}_{ x'\sim \pi_G}[\log (1 - D(x'))]. \label{eq:gan-obj}
\end{equation}

Our method integrates these two approaches in a principled way to exploit latent space geometry, as detailed next.


\section{Latent Adversarial Regularization}\label{sec:method}


In this section, we introduce latent space adversarial regularization, leading to the proposed method GANPO. 

\subsection{Latent-Space Regularization}\label{sec:method-richer}

Offline preference optimization, e.g., DPO, uses a loss function (equation \ref{eq:dpo_loss}) that implicitly regularizes the deviation of a learned policy
$\pi_\theta$ from a reference policy $\pi_{\mathrm{ref}}$ using a token-space KL divergence of $\divg_{\mathrm{KL}}(\pi_\theta \, \| \, \pi_{\mathrm{ref}})$.
However, token-space divergences can be a coarse proxy for the actual behavioral similarity, as they may assign large divergences to semantically similar outputs. Unlike standard reward models that rely on the token verified reward, our discriminator $D$ aligns the global semantic structure by operating on the representation of the entire sequence.

To explore regularization in latent space, we consider prompt-response pairs  $(x, y)$ drawn from a data distribution, e.g., samples from a preference dataset.
Let $\rep_\theta$ be the corresponding latent representation produced by the policy model $\pi_\theta$, and
$\rep_{\mathrm{ref}}$ the corresponding representation from $\pi_{\mathrm{ref}}$.  Specifically, we designate the LLM's final-layer hidden state representations as the latent space representations.
The induced {{{representation}} distributions} are respectively denoted by 
\begin{align}
\rep_\theta\sim \repp_\theta 
\qquad \text{and }\quad 
\rep_{\mathrm{ref}} \sim \repp_{\mathrm{ref}}.
\end{align}

\textbf{{The general form of latent space regularized OPO.}}
Given a divergence function $\divg$ of the two latent representation distributions $\repp_\theta$ and $\repp_{\mathrm{ref}}$, we augment the standard OPO loss in a plug-and-play manner as follows, so the original loss can be from any existing policy optimization methods. 
\begin{align}
\min_{\pi_\theta}\ \mathcal{L}_{\text{OPO}}(\pi_\theta; \pi_{\text{ref}}) + \lambda \cdot \divg(\repp_\theta\, \| \, \repp_{\mathrm{ref}}).
\label{eq:latent_reg}
\end{align}

\subsection{Generative Adversarial Formulation}

While choosing the divergence $\divg$ in equation~\eqref{eq:latent_reg} to be the $\divg_{\mathrm{KL}}$ is conceptually natural, it typically requires densities of
latent representations, which are generally intractable.

Fortunately, some divergences enjoy a variational form. For example, let $\bar \repp:= \tfrac{1}{2}(\repp_\theta+\repp_{\mathrm{ref}})$ denote their average mixture, the Jensen-Shannon divergence (JSD) has a dual formulation~\citep{goodfellow2014generative} expressed as follows.
\begin{align}
&2\divg_{\mathrm{JSD}}\!\left(\repp_\theta \,\|\, \repp_{\mathrm{ref}}\right)
:=
\divg_{\mathrm{KL}}\!\left(\repp_\theta \,\middle\|\, \bar \repp\right)+\divg_{\mathrm{KL}}\!\left(\repp_{\mathrm{ref}} \,\middle\|\, \bar \repp\right)=\log 4 
\\
&+\sup_{D} \left[
\mathbb{E}_{\rep_{\mathrm{ref}}}\big[\log D(\rep_{\mathrm{ref}})\big]  
+
\mathbb{E}_{\rep_{\theta}}\big[\log\big(1-D(\rep_\theta)\big)\big] \right],
\end{align}
where the supremum over $D$ is taken over every function mapping from the space of latent representations to $[0,1]$. 

This is precisely the objective for the discriminator in the standard GANs (equation \ref{eq:gan-obj}), where $\rep_{\mathrm{ref}}$ corresponds to the real samples and $\rep_\theta$ corresponds to fake  samples.

\textbf{The relativistic average GANs objective.} The standard GANs are notoriously unstable in training.
To improve optimization stability, we adopt the {relativistic average} version of the GANs objective~\citep{jolicoeur2018relativistic}.
It estimates the probability that a real sample is more realistic than the average of the fake samples in the current batch. 

Let $C_\phi$ be the scalar logit from the discriminator parameterized by $\phi$.
Given a batch of samples, define the average baselines as
\begin{align}
m_\theta := \mathbb{E}_{\rep\sim \repp_\theta}[C_\phi(\rep)],
\qquad
m_{\mathrm{ref}} := \mathbb{E}_{\rep\sim \repp_{\mathrm{ref}}}[C_\phi(\rep)].
\end{align}
The relativistic discriminator $\tilde D_\phi$ is defined to be
\begin{align}
\tilde D_\phi(\rep) := \begin{cases}
    \sigma\big(C_\phi(\rep) - m_\theta\big)\quad &\text{if }\  \rep \sim \repp_{\mathrm{ref}},\\
    \sigma\big(C_\phi(\rep) - m_{\mathrm{ref}}\big)\quad &\text{if }\  \rep \sim \repp_{\theta}.
\end{cases}
\label{eq:ragan_D}
\end{align}
where $\sigma$ is the sigmoid function.

The regularization we use in GANPO is therefore implemented by relativistic average GANs (RaGANs), where the divergence is formulated as follows.
\begin{align}
&\divg_{\mathrm{Ra}}(\repp_{\theta} \, \| \,\repp_{\mathrm{ref}})=\log 4 \\
&+ \sup_{\phi}
\Big[
\mathbb{E}_{\rep_{\mathrm{ref}}}\big[\log \tilde D_\phi(\rep_{\mathrm{ref}})\big] +\mathbb{E}_{\rep_{\theta}}\big[\log\big(1-\tilde D_\phi(\rep_\theta)\big)\big] \Big] \label{eq:ragan_loss}.
\end{align}

It is shown in \citet{jolicoeur2020relativistic} that, under the same type of assumptions required for the dual formulation of the $\divg_{\mathrm{JSD}}$, $\divg_{\mathrm{Ra}}$ is a well-defined divergence. I.e., it satisfies: (i) $\divg_{\mathrm{Ra}} (\repp_{\theta} \, \| \,\repp_{\mathrm{ref}}) \geq 0 $; and (ii) $\divg_{Ra}(\repp_{\theta} \, \| \,\repp_{\mathrm{ref}}) =0 \iff \repp_{\theta}=\repp_{\mathrm{ref}}$. A detailed theoretical analysis of this claim is provided in Appendix~\ref{appdix:divg}.  

To further simplify the divergence and prepare for implementation, we can interpret  $\tilde D_\phi(\rep)\in(0,1)$ to be the discriminator predicting the probability that $\rep$ comes from the reference  distribution, where we assign labels $1$ to $\rep_{\mathrm{ref}}$ and $l=0$ to $\rep_{\theta}$. Then the discriminator objective in equation~\eqref{eq:ragan_loss}
is exactly a binary cross-entropy (BCE) term for this binary classification problem. 

In general, for two generic distributions of representations with random samples $\rep_1$ (label $1$) and $\rep_2$ (label $0$), denote
\begin{align}
&\bce_\phi(\rep_1, \rep_2)\\
&:=
-\mathbb{E}_{\rep_1}\!\left[\log \tilde D_\phi(\rep_1)\right]
-\mathbb{E}_{\rep_2}\!\left[\log\!\left(1-\tilde D_\phi(\rep_2)\right)\right].    
\end{align}
Therefore, the relativistic average divergence (equation~\ref{eq:ragan_loss}) can be simply written as
\begin{align}
    \divg_{\mathrm{Ra}}(\repp_{\theta} \, \| \,\repp_{\mathrm{ref}})&=\log 4 + \sup_\phi \ -\bce_\phi(\rep_{\mathrm{ref}}, \rep_\theta)\\
    &=\log 4 -\inf_\phi \ \bce_\phi(\rep_{\mathrm{ref}}, \rep_\theta). \label{eq:bce-divg}
\end{align}

Up to this point, we have defined the latent-space regularization using the relativistic average divergence $\divg_{\mathrm{Ra}}$. This divergence is reparameterized through the variational formulation in \eqref{eq:bce-divg}, where $\phi$ can be interpreted as a discriminator trained to minimize the $\bce$ loss. This leads to the GAN preference optimization algorithm, as presented next.

\section{GAN Preference Optimization}
Before implementing GANPO, a design choice must be made: which  data  are used to extract latent representations.
Preference alignment is typically implemented with datasets consisting of chosen and rejected pairs $(y_w, y_l)$. 
To fully utilize the paired nature of preference data $(x, y_w, y_l)$, we move beyond the standard binary GAN objective to a quad representation space.

\subsection{GANPO Loss Functions}

Given samples $(x, y_w, y_l)\sim \mathcal{D}$ from a preference dataset, we implement GANPO on a quad-tuple of latent representations at each training step, defined as follows. The latent representations are the last layer of hidden outputs from the model inferences.
\begin{enumerate}[nosep, leftmargin=15pt]
    \item $\rep_{\mathrm{ref}}^{+}$ from the $\mathrm{ref}$  on the chosen response ($x, y_w$). 
    \item $\rep_{\mathrm{ref}}^{-}$ from the $\mathrm{ref}$  on the chosen response ($x, y_l$). 
    \item $\rep_{\theta}^{+}$ from the $\mathrm{policy}$ on the chosen response ($x, y_w$).
    \item $\rep_{\theta}^{-}$ from the $\mathrm{policy}$ on the chosen response ($x, y_l$). 
\end{enumerate}


Given four representations, we treat the reference model’s positive and negative representations as \emph{anchors} in latent space. To effectively exploit signals from both two models, we adopt a \emph{relativistic average discriminator}, which enables the discriminator to reason about representations in a comparative manner rather than in isolation. Concretely, we introduce two discriminators that respectively model the distributions of ``good'' and ``bad'' latent representations, since two manifolds might be topologically distinct, and a single discriminator cannot separate multiple distributions simultaneously. As formalized in Eq.~\eqref{eq:bce-divg}, both discriminators are trained using relativistic binary cross-entropy objectives, while the policy model is regularized against these discriminators to align its latent representations accordingly. This design allows the model to simultaneously distinguish high- and low-quality responses and to receive dense structural feedback during optimization.







a) \textbf{Generator Optimization}: The policy model $\theta$ minimizes the latent-space regularization $\divg_{\mathrm{Ra}}(\repp^+_{\theta} \, \| \,\repp^+_{\mathrm{ref}})$   and $\divg_{\mathrm{Ra}}(\repp^-_{\theta} \, \| \,\repp^-_{\mathrm{ref}})$.  Given trained discriminators $\phi_{\mathrm{pos}}$ and $\phi_{\mathrm{neg}}$, according to \eqref{eq:bce-divg}, the latent space adversarial regularization loss for the policy model $\theta$ is:
\begin{align}\label{eq:g}
  \mathcal{L}_{\mathrm{adv}} :=  \underbrace{-\bce_{\phi_{\mathrm{pos}}}( \rep_{\mathrm{ref}}^+ , \rep_{\theta}^+ ) }_{\text{Mimic Good}}  \underbrace{-\bce_{\phi_{\mathrm{neg}}}( \rep_{\mathrm{ref}}^-, \rep_{\theta}^-)}_{\text{Align Bad}}
\end{align}
We note that in the actual implementation, we view the moving average $m_\theta$ as a constant. Thus, only $\mathbb{E}_{\rep_{\theta}}\big[\log\big(1-\tilde D_\phi(\rep_\theta)\big)\big]$ in the $\bce$ loss is relevant  for the optimization of the policy model $\theta$.

\begin{algorithm}[tb]
\caption{GAN Preference Optimization (GANPO)}
\label{alg:quadgan}
\begin{algorithmic}[1]
\STATE {\bfseries Input:} Preference dataset $\mathcal{D} = \{(x, y_w, y_l)\}$, Policy $\pi_\theta$, Reference $\pi_{\text{ref}}$, Discriminators ${\phi_{\mathrm{pos}}}$ and ${\phi_{\mathrm{neg}}}$.
\STATE {\bfseries Hyperparameters:} Learning rate $\eta$, Adversarial weight $\lambda$, moving average decay rate $\alpha$.
\STATE {\bfseries Initialize:} Global running means $\mu_{\text{pos}} \leftarrow 0$, $\mu_{\text{neg}} \leftarrow 0$.

\FOR{each training step $t=1, \dots, T$}
    \STATE \textbf{1. Data Sampling}
    \STATE Sample batch $\mathcal{B} = \{(x, y_w, y_l)\} \sim \mathcal{D}$.
    
    \STATE \textbf{2. Feature Extraction (Latent Space)}
    \STATE Get last hidden states from Policy $\pi_\theta$ (with gradients) and Reference $\pi_{\text{ref}}$ (frozen):
    \STATE \quad $\rep_{\theta}^+, \rep_{\theta}^- \leftarrow \text{Forward}(\pi_\theta, \mathcal{B})$
    \STATE \quad $\rep_{\mathrm{ref}}^+, \rep_{\mathrm{ref}}^- \leftarrow \text{Forward}(\pi_{\text{ref}}, \mathcal{B})$

    \STATE \textbf{3. Discriminator Optimization (Relativistic)}
    \STATE Compute raw logits $s = C_\phi(\rep)$ for all four $\rep$.
    \STATE Update global running means via moving average:
    \STATE \quad $\mu_{\text{pos}} \leftarrow \alpha \mu_{\text{pos}} + (1-\alpha) \text{Mean}(s_{\text{ref}}^+)$
    \STATE \quad $\mu_{\text{neg}} \leftarrow \alpha \mu_{\text{neg}} + (1-\alpha) \text{Mean}(s_{\text{ref}}^-)$
    
    \STATE Compute Discriminators losses $\mathcal{L}_{\phi_{\mathrm{pos}}}$ and $\mathcal{L}_{\phi_{\mathrm{neg}}}$ with Eq~\ref{eq:d-pos} \& Eq~\ref{eq:d-neg}
    
    \STATE Update Discriminators: 
    \STATE \quad $\phi_{\mathrm{pos}} \leftarrow \phi_{\mathrm{pos}} - \eta \nabla_{\phi_{\mathrm{pos}}} (\mathcal{L}_{\phi_{\mathrm{pos}}})$
    \STATE \quad $\phi_{\mathrm{neg}} \leftarrow \phi_{\mathrm{neg}} - \eta \nabla_{\phi_{\mathrm{neg}}} (\mathcal{L}_{\phi_{\mathrm{neg}}})$

    \STATE \textbf{4. Generator Optimization}
    \STATE Compute Offline PO (e.g., DPO) Loss $\mathcal{L}_{\mathrm{OPO}}$.
    \STATE Compute Generator Loss $\mathcal{L}_{\operatorname{adv}}$ using Eq.~\ref{eq:g}
    
    \STATE Update Generator: 
    \STATE \quad $\theta \leftarrow \theta - \eta \nabla_\theta (\mathcal{L}_{\mathrm{OPO}} + \lambda \mathcal{L}_{\mathrm{adv}})$
\ENDFOR
\end{algorithmic}
\end{algorithm}

b) \textbf{Positive Discriminator ($\phi_{\mathrm{pos}}$)}.
The positive discriminator minimizes $\bce_{\phi_{\mathrm{pos}}}(\rep_{\mathrm{ref}}^+, \rep_{\theta}^+)$, i.e., it aims to give the reference good representation to receive a higher score than the policy's good representation. To better utilize the latent space geometry, we ask it to also distinguish between the policy's good representation and the reference bad representation, i.e., minimizing $\bce_{\phi_{\mathrm{pos}}}(\rep_{\theta}^+ , \rep_{\mathrm{ref}}^-)$. This design allows the discriminator to capture fine-grained preference structure beyond simple real-fake classification. We have 
\begin{align}\label{eq:d-pos}
\mathcal{L}_{\phi_{\mathrm{pos}}}:=\underbrace{\bce_{\phi_{\mathrm{pos}}}(\rep_{\mathrm{ref}}^+, \rep_{\theta}^+)}_{\text{Ref Good $>$ Policy Good}} + \underbrace{\bce_{\phi_{\mathrm{pos}}}(\rep_{\theta}^+ , \rep_{\mathrm{ref}}^-)}_{\text{Policy Good $>$ Ref Bad}}
\end{align}

c) \textbf{The Negative Discriminator ($\phi_{\mathrm{neg}}$)}: Similarly, we have 

\begin{align}\label{eq:d-neg}
 \mathcal{L}_{\phi_{\mathrm{neg}}}:=\underbrace{\bce_{\phi_{\mathrm{neg}}}(\rep_{\mathrm{ref}}^-, \rep_{\theta}^-))}_{\text{Ref Bad $>$ Policy Bad}} + \underbrace{\bce_{\phi_{\mathrm{neg}}}(\rep_{\theta}^-, \rep_{\mathrm{ref}}^+)}_{\text{Policy Bad $>$ Ref Good}}
\end{align}

Thus, the generator (policy model) and the discriminators are optimized alternately, as detailed in Algorithm~\ref{alg:quadgan}.

\subsection{Design Choices: from a GAN Perspectives} 
Having motivated GANPO from a regularization perspective, we now discuss it from a purely GAN-based viewpoint.

\textbf{The structure-preserving adversarial game.} Viewing preference alignment through a GAN-style lens highlights fundamental limitations of offline optimization methods such as DPO. Because DPO operates solely on a fixed preference dataset, the policy is trained to separate preferred and rejected responses within the data distribution. This disconnect encourages spurious correlations, most notably between implicit reward and response length, leading to verbosity rather than genuine semantic improvement~\citep{liu2024length}. GANPO mitigates this issue by introducing an adversarial discriminator that operates directly on latent representations, providing dense structural feedback while framing preference alignment as a zero-sum game between the generator and the discriminator, in which both components are jointly strengthened through adversarial optimization. This adversarial signal acts as a geometry-preserving regularizer, constraining the policy to remain aligned with the reference manifold of high-quality responses even under distributional shift. We discuss more on this in Section~\ref{sec:results}.

\textbf{Design choice: the definition of ``real'' data.} 
From a GAN perspective, the ``real” data in GANPO consists of representations generated by the reference model, which the policy model aims to match. Alternatively, one might ask what would happen if the “real” data were representations obtained from a stronger external teacher model. While $\pi_{ref}$ may be sub-optimal in text generation, its latent manifold represents the well-formed structure of natural language acquired during pre-training. We view the adversarial loss not as a ``correctness'' objective (handled by DPO), but as a ``syntax/manifold'' constraint to prevent mode-collapse. Further, we argue that anchoring to the reference model offers two key advantages over anchoring to a teacher model.

\textbf{a) Manifold consistency for training stability.} A strong teacher model often lies on a distributional manifold that is too dissimilar from the policy, causing the discriminator to learn superficial stylistic differences rather than meaningful structural distinctions, which leads to rapid saturation. With reference-anchored training, we ensure meaningful distributional overlap, forcing the discriminator to learn semantic distinctions and provide dense, informative gradients. 

\textbf{b) Computational efficiency.} Sampling from an external Teacher at each training step is prohibitively expensive. In contrast, $\pi_{\mathrm{ref}}$ is usually required for preference optimization (e.g., DPO), enabling a fully offline, self-contained adversarial training loop with small additional overhead.

\section{Experiment}
\begin{table*}[h]
\centering
\small
\setlength{\tabcolsep}{10.5pt}
\begin{tabularx}{\textwidth}{l c c c c c c c c}
\toprule
 & \multicolumn{4}{c}{\textbf{Gemma2-2B-it}} & \multicolumn{4}{c}{\textbf{Llama3-8B-Instruct}} \\
\cmidrule(lr){2-5} \cmidrule(lr){6-9}

\textbf{Method}
& \textbf{Disc.}
& \textbf{Win}
& \textbf{LC-Win}
& \textbf{Len}
& \textbf{Disc.}
& \textbf{Win}
& \textbf{LC-Win}
& \textbf{Len} \\

\midrule
\bf DPO
& N/A & 22.76 & 27.79 & 1668
& N/A & 33.90 & 32.34 & 2041 \\

\bf GANPO (DPO)
& Transformer & \textbf{24.17} & \textbf{29.69} & 1664
& Transformer & \textbf{35.23} & \textbf{33.87} & 2043 \\
\midrule
\bf SimPO
& N/A & 30.66 & 36.03 & 1740
& N/A & 44.09 & 48.31 & 1836 \\

\bf GANPO (SimPO)
& Transformer & \textbf{31.37} & \textbf{36.74} & 1745
& Transformer & \textbf{46.11} & \textbf{50.48} & 1834 \\

\bottomrule
\end{tabularx}
\caption{\textbf{AlpacaEval 2.0 (\texttt{weighted\_alpaca\_eval\_gpt4\_turbo}) results}. GANPO yields consistent gains in raw and length-controlled win rates over DPO and SimPO across model scales, without increasing response length.}
\label{tab:alpaca_eval}
\end{table*}

\begin{figure*}[t]
    \centering
    \begin{subfigure}[t]{0.48\linewidth}
        \centering
        \includegraphics[width=0.92\linewidth]{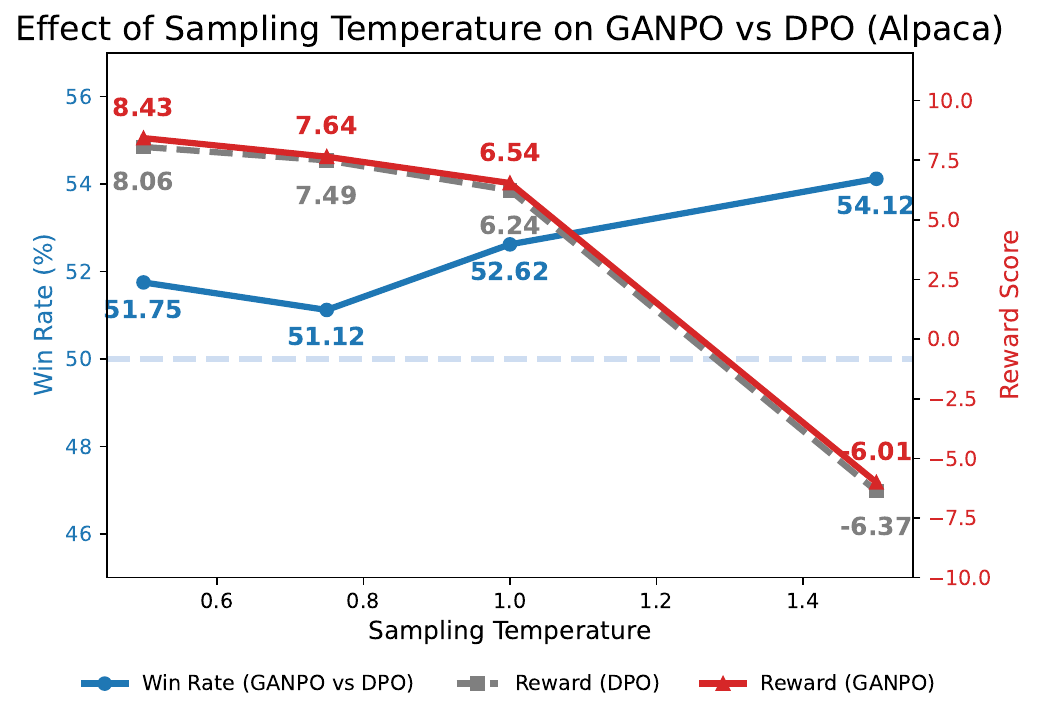}
        \caption{\textbf{AlpacaEval.} GANPO widens the performance gap over DPO as entropy increases ($T \geq 1.0$), demonstrating better quality retention under stochastic sampling.}
        \label{fig:temp_alpaca}
    \end{subfigure}
    \hfill
    \begin{subfigure}[t]{0.48\linewidth}
        \centering
        \includegraphics[width=0.92\linewidth]{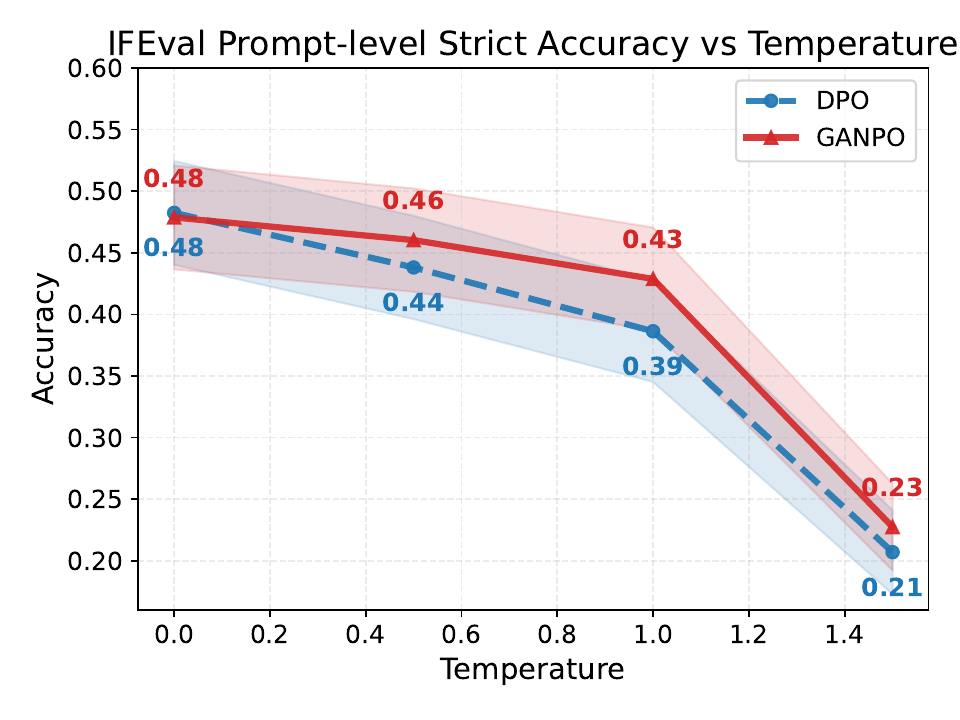}
        \caption{\textbf{IFEval Strict Accuracy.} While DPO suffers from rapid structural degradation as temperature rises, GANPO exhibits resilience, maintaining high instruction adherence even in high-noise regimes.}
        \label{fig:temp_ifeval}
    \end{subfigure}

    \caption{\textbf{Robustness against entropy.} Comparison of model performance across varying sampling temperatures ($T \in [0.0, 1.5]$). Unlike DPO, which relies heavily on greedy decoding for peak performance and collapses under noise, GANPO acts as a structural regularizer, effectively preserving both preference alignment and constraint satisfaction during high-entropy generation.}
    \label{fig:temperature_analysis}
\end{figure*}

\subsection{Experimental setup}
\textbf{Datasets and setup.} Our models are trained on the UltraFeedback dataset~\citep{Cui2024UltraFeedbackBL}, a large-scale preference dataset for instruction-following and dialogue alignment. We follow standard preference optimization training protocols, with minimal modifications to support the GANPO framework. Our evaluation focuses on four aspects: (1) Assessing GANPO’s effectiveness on general instruction-following using AlpacaEval-style metrics; (2) Examining its robustness across model architectures and scales, with experiments on Gemma2-2B-it~\citep{team2024gemma} and Llama-3-8B-Instruct~\citep{llama3modelcard} models; (3) Providing analysis and insights of the structural regularization of GANPO; and (4) Evaluating whether GANPO preserves or improves performance on downstream tasks beyond preference alignment. Full experimental details are provided in Appendix~\ref{app:hyper}. 

\textbf{Baselines.} We compare against DPO~\citep{rafailov2023dpo}, which aligns models via a contrastive objective relative to a fixed reference policy, and SimPO~\citep{meng2024simpo}, which removes the reference model for a simpler and more efficient objective. GANPO is a plug-and-play extension to both methods, retaining their original preference losses while adding structural supervision in the latent space, requiring no changes to the underlying training pipeline.

\subsection{Results and Analysis}\label{sec:results}

\textbf{Preference alignment on open-ended instructions.} Table \ref{tab:alpaca_eval} shows that GANPO consistently improves over its non-adversarial counterparts across both model scales on AlpacaEval-2.0. On Gemma2-2B-it, GANPO yields a clear gain in both raw and length-controlled win rates over DPO (+1.41\% LC-Win) and SimPO (+0.71\% LC-Win), while maintaining comparable response lengths. Similar trends hold for Llama3-8B-Instruct, where GANPO improves LC-Win rates around 1.5\%-2.0\% over DPO and SimPO. These results indicate that adversarial regularization provides benefits to preference optimization, improving alignment quality without relying on increased verbosity.

\textbf{Structural regularization under stochastic decoding.} To examine the impact of the structured regularization imposed by GANPO under increasingly stochastic decoding, we stress-test the Gemma2-2B-it model across a spectrum of sampling temperatures ($T \in [0.0, 1.5]$). Higher temperatures induce greater diversity but simultaneously amplify exposure bias and structural instability, serving as a proxy for out-of-distribution robustness. Our evaluation spans two distinct regimes: 

\textbf{(1) AlpacaEval:} We evaluate general response quality using the Skywork-Reward-V2-Llama-3-8B-it model~\citep{liu2025skywork} as an oracle judge. As shown in Figure~\ref{fig:temp_alpaca}, GANPO consistently achieves higher winrate and reward scores than DPO across a wide range of temperatures. Crucially, the winrate gap widens in high-entropy regimes ($T \ge 1.0$). This divergence indicates that GANPO is more robust under high-entropy generation, where DPO’s token-level optimization becomes increasingly brittle. 

\textbf{(2) IFEval~\citep{zhou2023instructionfollowing}:} To assess the stability of instruction following under noise, we measure the strict prompt-level accuracy with structured outputs on IFEval (Figure~\ref{fig:temp_ifeval}). Here, the contrast is stark: DPO suffers from more severe structural collapse as temperature increases (dropping nearly 20\% in accuracy from $T=0.0$ to $T=1.0$), indicating that its adherence to constraints heavily relies on greedy decoding. In contrast, GANPO demonstrates stronger resilience, retaining good strict accuracy under stochastic sampling. 

Together, GANPO moves beyond surface-level alignment and learns a structurally robust manifold. As a result, preference alignment and constraint adherence remain stable even when generation trajectories deviate from the optimal path, where purely likelihood-based methods tend to degrade.

\textbf{The effectiveness of $D$.} To evaluate whether the trained discriminator can reliably distinguish high- and low-quality representations, we train a Gemma2-2B-it reward model on the UltraFeedback dataset as a proxy reward and compare its scores against the gold standard Skywork-Reward-V2-Llama-3-8B-it model~\citep{liu2025skywork}. We conduct stress tests under high-entropy generation by sampling a single response ($N=1$) from a large candidate pool ($k=1024$) at elevated temperatures ($T=1.5$ and $T=2.0$). In Figure~\ref{fig:disc_effect}, under these out-of-distribution conditions, the learned reward model exhibits severe reward hacking, collapsing to weak ($r=0.14$) or even negative correlations ($r=-0.50$) with the oracle. In contrast, the discriminator maintains a strong positive correlation ($r=0.59$ and $r=0.52$), demonstrating robustness to distributional shift. These results suggest that the discriminator acts as an effective structural regularizer in latent space, capturing semantic properties rather than surface-level token patterns.

\begin{figure*}[t]
    \centering
    \begin{subfigure}[t]{0.48\linewidth}
        \centering
        \includegraphics[width=0.9\linewidth]{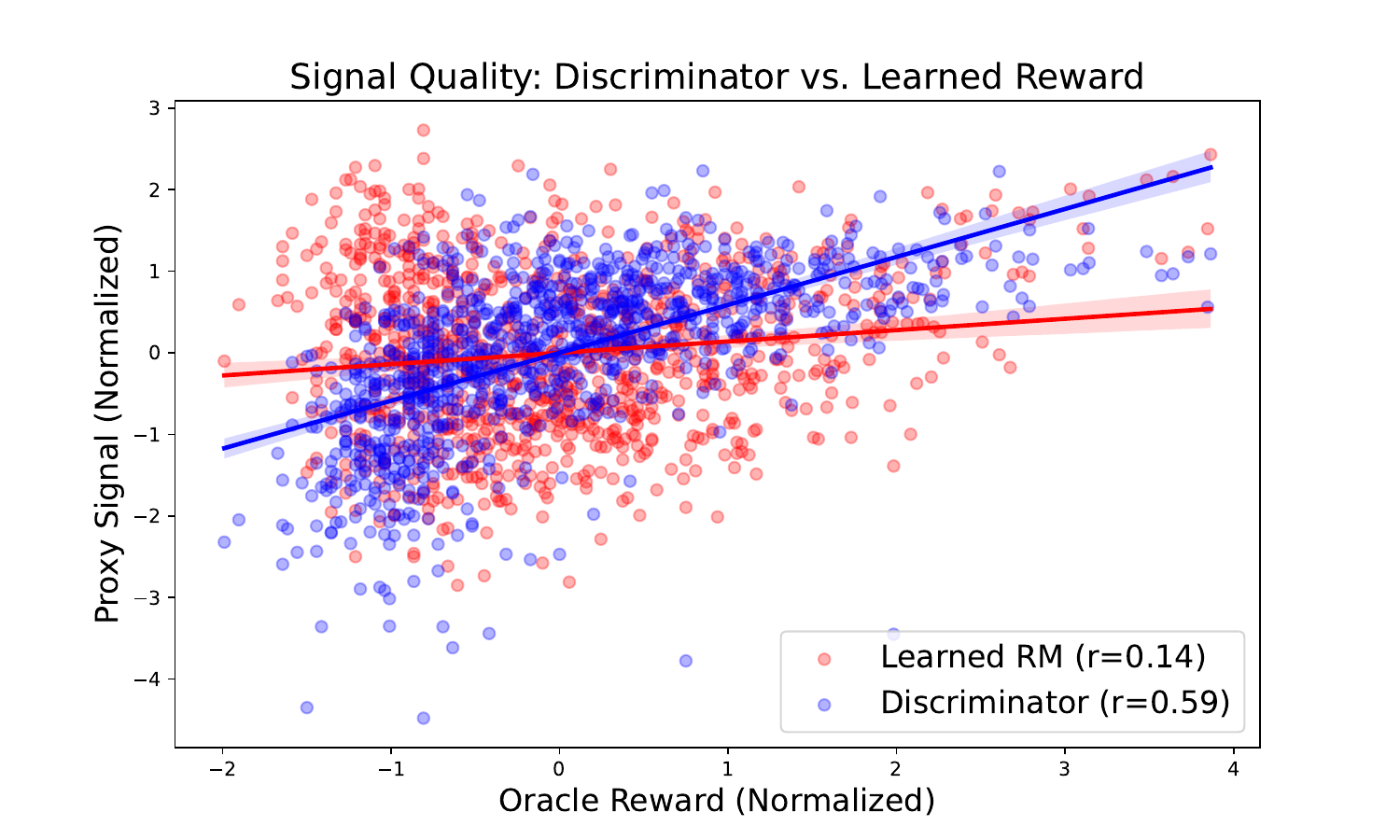}
        \caption{$T=1.5$}
        \label{fig:disc_1.5}
    \end{subfigure}
    \hfill
    \begin{subfigure}[t]{0.48\linewidth}
        \centering
        \includegraphics[width=0.9\linewidth]{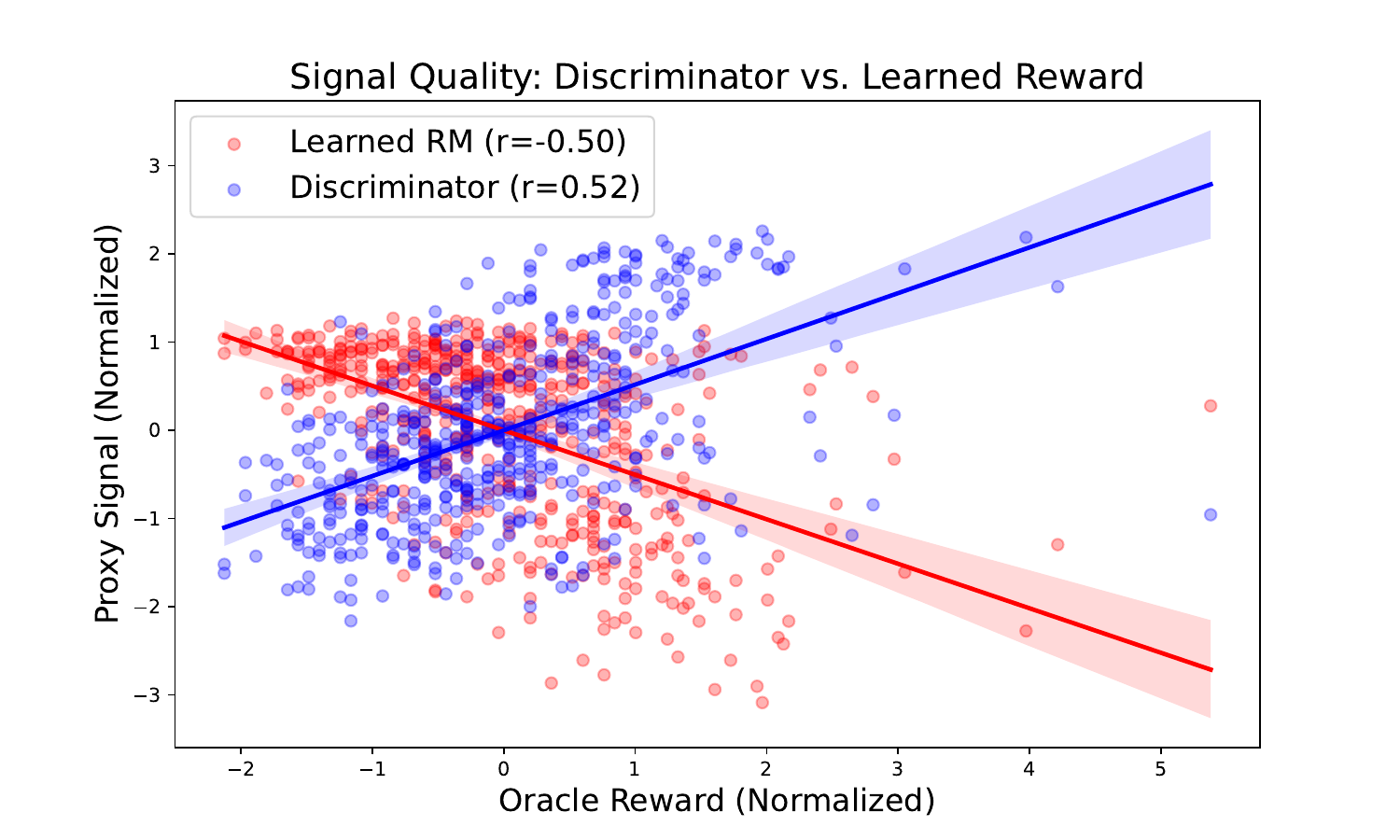}
        \caption{$T=2.0$}
        \label{fig:disc_2}
    \end{subfigure}

    \caption{Comparison of discriminator-based scoring and learned reward models under high-entropy generation. At elevated sampling temperatures ($T=1.5$ and $T=2.0$), the learned reward model exhibits severe reward hacking, including correlation collapse and inversion with respect to the oracle. In contrast, the discriminator maintains a strong positive correlation, demonstrating robustness to out-of-distribution generations and providing stable structural supervision in latent space.}
    \label{fig:disc_effect}
\end{figure*}

\textbf{Downstream evaluation on multiple benchmarks.} Beyond AlpacaEval, GANPO does not degrade, and in several cases improves performance on downstream benchmarks, including math, reasoning, and factuality tasks (Table~\ref{tab:gemma_downstream}). This suggests that the adversarial objective does not overfit to the preference dataset by sacrificing the performance on other tasks. Instead, it acts as a form of structured regularization, encouraging representations that generalize beyond the alignment setting. 

\begin{table}[h]
\centering
\fontsize{7.7}{9}\selectfont
\begin{tabular}{lcccc}
\toprule
\textbf{Method} & \textbf{GSM8K} & \textbf{MMLU} & \textbf{ANLI} (R3) & \textbf{TruthfulQA} \\
                & \textbf{(Math)} & \textbf{(Knowledge)} & \textbf{(Reasoning)} & \textbf{(Factuality)} \\
\midrule
Base        & 46.32 & 56.73 & 47.75 & 53.11 \\
DPO             & 48.37 & \bf57.02 & 47.92 & 55.28 \\
GANPO           & \textbf{48.67} & 56.93 & \bf48.25 & \textbf{55.67} \\
\bottomrule
\end{tabular}
\caption{Downstream evaluation on Gemma2-2B-it model.}\label{tab:gemma_downstream}
\vspace{-0.2in}
\end{table}

\textbf{Architectures of $D$.} Across all experiments, we observe that a Transformer-based discriminator consistently outperforms simpler alternatives, such as fixed MSE critics or shallow MLPs, as shown in Table~\ref{tab:disc_ablation}. This architectural choice enables the discriminator to provide holistic, sequence-level feedback, capturing long-range dependencies and structural properties of text that scalar or local critics fail to model. As reflected in the AlpacaEval results, stronger discriminators translate directly into higher win rates, underscoring the importance of expressive discriminator architectures for effective GAN preference optimization.

\begin{table}[h]
\centering
\fontsize{8}{9}\selectfont
\begin{tabular}{lccc}
\toprule
\textbf{Discriminator} & \textbf{Win Rate} & \textbf{LC-Win Rate} & \textbf{Avg Length} \\
\midrule
N/A                 & 22.76 & 27.79 & 1668 \\
MSE (fixed $D$)     & 22.96 & 27.03 & 1684 \\
MLP                 & 22.66 & 27.52 & 1671 \\
Transformer         & \textbf{24.17} & \textbf{29.69} & 1664 \\
\bottomrule
\end{tabular}
\caption{Effect of $D$'s architecture on alignment performance.}\label{tab:disc_ablation}
\vspace{-0.25in}
\end{table}

\subsection{Discussion and Limitations}

\textbf{Limitations.} Unlike DPO and SimPO, which are parameter-efficient and essentially ``reward-free" in architecture, GANPO requires maintaining and updating a discriminator alongside the policy. The computational overhead is illustrated in Table~\ref{tab:training_cost}. The adversarial game introduces additional complexity in hyperparameter tuning compared to the stability of supervised objectives with modest additional cost. Also, our \textit{reference-anchored training} stabilizes the discriminator by using the SFT model to define the target manifold. This effectively bounds the exploration space. If the SFT model is fundamentally misaligned or possesses a defective latent structure, GANPO may struggle to diverge sufficiently to find a globally optimal policy, effectively inheriting the topological flaws of its anchor. Lastly, we acknowledge that latent space and token space regularization may be complementary and require more investigation.

\textbf{Future Work.} Standard alignment methods often struggle with strict syntactic constraints (e.g., valid JSON, compilable code). Future work should explore augmenting the discriminator with symbolic feedback, injecting compiler signals or logical verifiers directly into the latent loss, to enforce syntax as a differentiable manifold constraint. Also, GANPO currently operates in an offline setting. Extending this to an online ``Self-Play" framework where the model generates its own rollouts to be critiqued by an evolving discriminator could bridge the gap between offline efficiency and the performance benefits of online methods like PPO. Further, since GANPO operates on the representation space rather than discrete tokens, it is inherently modality-agnostic. Adapting this framework to Vision-Language Models (VLMs) could provide a powerful method for aligning cross-modal generation, where structural consistency between text and image representations is critical.


\section{Related Work}
\textbf{Preference optimization objectives.} Online preference optimization methods are often complex and difficult to stabilize~\citep{zheng2023secrets, santacroce2023efficient}, motivating the development of simpler and more efficient offline alternatives. DPO~\citep{Rafailov2023DirectPO} is a representative approach, but its lack of an explicit reward model limits its ability to leverage samples from the optimal policy. Prior work addresses this by augmenting preference data using SFT-generated responses or rejection sampling~\citep{Zhao2023SLiCHFSL, liu2024statistical}, and by extending DPO to iterative or self-improving training schemes~\citep{dong2024rlhf, Kim2024sDPODU, Rosset2024DirectNO, xiong2024iterative, yuan2024self}. For latent-space optimization,~\citet{hao2025training, zhu2025reasoning} show that latent representations can lead to improved reasoning capabilities. In this work, we focus on offline alignment with latent-space regularization. We compare GANPO against DPO~\citep{Rafailov2023DirectPO} and SimPO~\citep{meng2024simpo}, showing that GANPO consistently outperforms both with modest additional computational cost.

\textbf{GANs.} GANs formulate learning as a minimax game between a generator and a discriminator~\citep{goodfellow2014generative}, and GAN variants have been extensively studied for stabilizing adversarial distribution matching and improving training dynamics~\citep{zhu2017cyclegan, gulrajani2017wgan, jolicoeur2018relativistic, jolicoeur2020relativistic}. In natural language generation, GAN-based methods have been explored empirically~\citep{zhang2016generating, zhang2017adversarial}, where a discriminator distinguishes generated text from human-written samples, and TextGAIL~\citep{wu2021textgail} adapts adversarial imitation learning to optimize language models as response policies. More recently, minimax formulations have been proposed for preference learning, such as the Adversarial Preference Optimization framework, in which the LLM and the reward model update alternatively via a minmax game in an online manner~\citep{cheng2023adversarial}. Orthogonal to previous work, we introduce a GAN-style adversarial regularizer operating in latent space, designed to complement \emph{offline} preference optimization and mitigate exposure bias and structural degradation in LLM generation.

\paragraph{Reinforcement learning from human feedback.} Reinforcement Learning from Human Feedback (RLHF) is a widely adopted paradigm for aligning large language models (LLMs) with human preferences and values~\citep{christiano2017deep, ziegler2019fine, Ouyang2022TrainingLM, bai2022training}. A classical RLHF pipeline typically consists of three stages: supervised fine-tuning~\citep{zhou2024lima, taori2023stanford, geng2023koala, DatabricksBlog2023DollyV2, kopf2024openassistant, Ding2023EnhancingCL, wang2024openchat, chen2024alpagasus, xia2024less}, reward model training~\citep{gao2023scaling, luo2023wizardmath, chen2024odin, lightman2023let, havrilla2024glore, lambert2024rewardbench}, and policy optimization, most commonly via Proximal Policy Optimization~\citep{schulman2017proximal, anthony2017thinking}. Also, recent work has highlighted systemic challenges throughout the RLHF pipeline~\citep{casper2023open}. Moreover, RLHF has been shown to induce unintended biases, such as excessive verbosity and length-based reward hacking~\citep{dubois2024length, singhal2023long, wang2023far}. In contrast, GANPO is a fully offline alignment method orthogonal to RLHF, as it requires no online rollouts or reinforcement learning objectives. By introducing adversarial structural feedback in the latent space, GANPO shows great potential in future LLM alignment.

\section{Conclusion}
We proposed GANPO, a framework that augments preference learning with adversarial regularization to address the structural degradation inherent in offline methods like DPO. By leveraging Latent-Space Alignment and a Dual-Contrastive Objective, GANPO enables the discriminator to provide differentiable feedback that guides the policy toward high-quality, structurally sound modes. Furthermore, our Reference-Anchored Training ensures that this process remains stable and computationally efficient. Our experiments demonstrate that GANPO significantly outperforms state-of-the-art baselines like DPO and SimPO, particularly in controlling verbosity and maintaining structural coherence. This work validates the hypothesis that adversarial feedback, when applied to latent representations, serves as a crucial regularizer for aligning LLMs with human preferences.

\section{Impact Statement}
This work advances preference optimization methods for large language models. The proposed approach improves the performance and robustness of offline alignment by introducing adversarial structural regularization. We do not anticipate societal or ethical impacts beyond those commonly associated with large language model training and alignment.

\section{Acknowledgments}
SK acknowledges support by NSF 2046795 and 2205329, IES R305C240046, ARPA-H, the MacArthur Foundation, Schmidt Sciences, HAI, OpenAI, Microsoft, and Google. This research used the DeltaAI advanced computing and data resource, which is supported by the National Science Foundation (award OAC 2320345) and the State of Illinois. DeltaAI is a joint effort of the University of Illinois Urbana-Champaign and its National Center for Supercomputing Applications. Further, we gratefully acknowledge the use of GPU computing resources provided by the CAIS Compute Cluster at the Center for AI Safety (Safe.ai).

\nocite{langley00}

\bibliography{references}
\bibliographystyle{icml2026}

\newpage
\appendix
\onecolumn
\section{Relativistic Average Divergence }\label{appdix:divg}

Consider two probability distributions $p, q$ with support $\mathcal X$. A divergence between two probability distributions is defined as follows.
\begin{definition}[Statistical Divergence]
    Let $\mathcal M$ denote the space of all probability distributions with support $\mathcal X$. A function $\divg:\mathcal M\times \mathcal M \to \mathbb R$ is a divergence if for all $p, q\in \mathcal M$:
    \begin{enumerate}[nosep, leftmargin=15pt]
        \item $\divg(p \, \|\, q)\geq 0$;
        \item $\divg(p \, \|\, q)=0 \iff p=q$.
    \end{enumerate}
\end{definition}

It is shown that the following divergence is well-defined.
\begin{proposition}[Relativistic Average Divergence~\citep{jolicoeur2020relativistic}]\label{prop:ra_divg}
    Let $f:\mathbb R\to \mathbb R$ be a concave function such that $f(0)=0$, $f$ is differentiable at $0$, $f'(0)\neq 0$, $\sup_x f(x)>0$, $\arg\sup_x f(x)>0$. Ley $p, q$ be two distributions with common support $\mathcal X$. Then, 
    \begin{align}
    \divg_{\mathrm{Ra}}^f(p \, \|\, q):= \sup_{C:\mathcal X \to \mathbb R}\ \mathbb{E}_{y\sim q}\left[f\left(C(y)-\mathbb E_{x\sim p} C(x)\right)\right]+\mathbb{E}_{x\sim p}\left[f\left(\mathbb E_{y\sim q} C(y)-C(x)\right)\right]
    \end{align}
    is a divergence.
\end{proposition}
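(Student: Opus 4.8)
The plan is to establish the two divergence properties of $\divg_{\mathrm{Ra}}^f$ by exploiting the concavity of $f$ together with the normalization condition $f(0)=0$. For the first property, nonnegativity, I would test the supremum with the trivial candidate $C\equiv 0$ (or more generally any constant function). For a constant $C$, both arguments $C(y)-\mathbb{E}_{x\sim p}C(x)$ and $\mathbb{E}_{y\sim q}C(y)-C(x)$ vanish identically, so the objective evaluates to $f(0)+f(0)=0$. Since $\divg_{\mathrm{Ra}}^f$ is a supremum over all $C$, it is at least this value, hence $\geq 0$. This step is routine.

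The substantive direction is property (ii), and in particular the forward implication: if $\divg_{\mathrm{Ra}}^f(p\,\|\,q)=0$ then $p=q$. The reverse implication ($p=q$ forces the value to be $0$) should follow from Jensen's inequality: when $p=q$, write $A(z):=C(z)-\mathbb{E}_{w\sim p}C(w)$, so that the objective becomes $\mathbb{E}_{z\sim p}[f(A(z))] + \mathbb{E}_{z\sim p}[f(-A(z))] \le f(\mathbb{E}[A(z)]) + f(\mathbb{E}[-A(z)]) = f(0)+f(0)=0$ by concavity of $f$ and the fact that $\mathbb{E}_{z\sim p}[A(z)]=0$; combined with nonnegativity this pins the value to exactly $0$. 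For the forward implication, the idea is contrapositive: assuming $p\neq q$, I would exhibit a specific $C$ making the objective strictly positive. The natural choice, following the variational-divergence playbook, is to take $C$ to be (a scaled version of) the log-density-ratio or an indicator of a set $S$ with $p(S)\neq q(S)$; then a first-order expansion of $f$ around $0$ using differentiability at $0$ and $f'(0)\neq 0$ shows that moving $C$ slightly away from the constant solution increases the objective to first order, because the symmetric Jensen gap closes only at $p=q$. The conditions $\sup_x f(x)>0$ and $\arg\sup_x f(x)>0$ are presumably needed to rule out degenerate $f$ (e.g. $f\equiv 0$ or $f$ maximized at a negative point) and to guarantee that the perturbation genuinely yields a positive value rather than merely a non-negative one.

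The main obstacle I anticipate is making the forward implication rigorous: one must show that the supremum is \emph{strictly} positive whenever $p\neq q$, which requires more than the infinitesimal first-order argument — one needs to either optimize $C$ explicitly (as in the Jensen-Shannon case, where the optimal discriminator is $C^\star = \log\frac{p}{p+q}$ up to transformation) or invoke a compactness/continuity argument to turn the local increase into a genuine positive supremum. A clean route is to restrict attention to two-point test functions $C = t\cdot \mathbf{1}_S$ for a fixed separating set $S$, reduce the problem to a one-dimensional optimization over $t\in\mathbb{R}$ of $g(t) := q(S)f(t(1-p(S)-\text{correction})) + \dots$, and show $g'(0)\neq 0$ or $g''(0)<0$ with $g(0)=0$, using $f'(0)\neq 0$ and the sign conditions on $f$ to conclude $\sup_t g(t)>0$. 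Since this proposition is attributed verbatim to \citet{jolicoeur2020relativistic}, I would also remark that the full argument is available there, and I would present the two easy directions in detail while sketching the separating-set construction for the hard direction.

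Finally, I would note the connection back to the main text: setting $f(x) = \log(\sigma(x)) = -\log(1+e^{-x})$ recovers $\divg_{\mathrm{Ra}}$ of equation~\eqref{eq:bce-divg} up to the additive $\log 4$ constant (one checks $f(0)=-\log 2$, so a shift is absorbed), confirming that the relativistic average GAN objective used in GANPO is indeed a bona fide divergence and hence a legitimate regularizer in equation~\eqref{eq:latent_reg}.
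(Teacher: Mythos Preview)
The paper does not prove this proposition at all: it is stated as a citation from \citet{jolicoeur2020relativistic} and used as a black box to establish the subsequent proposition (that the specific choice $f(x)=\log\sigma(x)+\log 2$ yields a divergence). Your proposal therefore goes well beyond what the paper does, and in fact your final paragraph --- verifying that $f(x)=\log\sigma(x)$, shifted by $\log 2$ so that $f(0)=0$, satisfies the hypotheses --- is precisely the content of the paper's \emph{next} proposition and its proof.

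As for the proof sketch itself: your arguments for nonnegativity (test $C\equiv 0$) and for the implication $p=q\Rightarrow\divg_{\mathrm{Ra}}^f=0$ (Jensen on both halves, using $\mathbb{E}[A]=0$) are correct. On the hard direction you are too pessimistic. The separating-set construction you outline already closes the argument: with $C=t\,\mathbf{1}_S$ and $a:=p(S)$, $b:=q(S)$, the objective becomes
\begin{align*}
g(t)=b\,f\!\big(t(1-a)\big)+(1-b)\,f(-ta)+a\,f\!\big(-t(1-b)\big)+(1-a)\,f(tb),
\end{align*}
so $g(0)=0$ and $g'(0)=2f'(0)\,(b-a)$. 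Since $f'(0)\neq 0$ and $a\neq b$, we have $g'(0)\neq 0$, hence $g(t)>0$ for some small $t$ of the appropriate sign, and the supremum is strictly positive. No compactness or explicit optimizer is required; the first-order computation you call ``infinitesimal'' is already a rigorous proof that $\sup_C>0$. The extra hypotheses $\sup_x f(x)>0$ and $\arg\sup_x f(x)>0$ are not needed for this particular route, which suggests the original reference may use a different argument (or states slightly more than the bare divergence property).
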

In the main paper (equation~\ref{eq:ragan_loss}), we define the following term.
\begin{align}
&\divg_{\mathrm{Ra}}(\repp_{\theta} \, \| \,\repp_{\mathrm{ref}}) := \sup_{\phi}
\Big[
\mathbb{E}_{\rep_{\mathrm{ref}}}\big[\log \sigma\big(C_\phi(\rep) - m_\theta\big)\big] +\mathbb{E}_{\rep_{\theta}}\big[\log\big(1-\sigma\big(C_\phi(\rep) - m_{\mathrm{ref}}\big)\big)\big] \Big] + \log 4 ,\label{eq:ra-divg-app}
\end{align}
where $C_\phi$ is the scalar logit from the discriminator parameterized by $\phi$; $h\in \mathcal H$ denotes a latent representation; $\sigma$ denotes the sigmoid function;  and the average baselines are
\begin{align}
m_\theta := \mathbb{E}_{\rep\sim \repp_\theta}[C_\phi(\rep)],
\qquad
m_{\mathrm{ref}} := \mathbb{E}_{\rep\sim \repp_{\mathrm{ref}}}[C_\phi(\rep)]. \label{eq:mean}
\end{align}

We can prove that equation~\ref{eq:ra-divg-app} is indeed a well-defined divergence given that $C_\phi$ is taken over all functions mapping from the representation space $\mathcal H\to \mathbb R$.
\begin{proposition}
    The following term is a divergence between $\repp_\theta$ and $\repp_{\mathrm{ref}}$.
    \begin{align}
        \divg_{\mathrm{Ra}}(\repp_{\theta} \, \| \,\repp_{\mathrm{ref}}) := \sup_{C:\mathcal H \to \mathbb R}
\Big[
\mathbb{E}_{\rep_{\mathrm{ref}}}\big[\log \sigma\big(C(\rep) - m_\theta\big)\big] +\mathbb{E}_{\rep_{\theta}}\big[\log\big(1-\sigma\big(C(\rep) - m_{\mathrm{ref}}\big)\big)\big] \Big] + \log 4.
    \end{align}
\end{proposition}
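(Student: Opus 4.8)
The plan is to recognize the displayed quantity as a special case of the relativistic average divergence $\divg_{\mathrm{Ra}}^f$ of Proposition~\ref{prop:ra_divg}, for a carefully chosen generator function $f$, and then to verify that this $f$ satisfies every hypothesis of that proposition. Throughout I assume, exactly as in the dual formulation of $\divg_{\mathrm{JSD}}$ invoked in the main text, that $\repp_\theta$ and $\repp_{\mathrm{ref}}$ have a common support $\mathcal H$.

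First I would symmetrize the two summands. Since $1-\sigma(z)=\sigma(-z)$, the second term satisfies $\log(1-\sigma(C(\rep)-m_{\mathrm{ref}})) = \log\sigma(m_{\mathrm{ref}}-C(\rep))$, so the bracket equals $\mathbb{E}_{\rep_{\mathrm{ref}}}[\log\sigma(C(\rep)-m_\theta)] + \mathbb{E}_{\rep_\theta}[\log\sigma(m_{\mathrm{ref}}-C(\rep))]$. I would then fold the additive constant $\log 4 = 2\log 2$ into the two expectations, one $\log 2$ each, and set $\tilde f(z) := \log(2\sigma(z)) = \log 2 - \log(1+e^{-z})$. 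This yields $\divg_{\mathrm{Ra}}(\repp_\theta\,\|\,\repp_{\mathrm{ref}}) = \sup_{C:\mathcal H\to\mathbb R}\big\{\mathbb{E}_{\rep_{\mathrm{ref}}}[\tilde f(C(\rep)-m_\theta)] + \mathbb{E}_{\rep_\theta}[\tilde f(m_{\mathrm{ref}}-C(\rep))]\big\}$. Recalling $m_\theta=\mathbb{E}_{\rep\sim\repp_\theta}C(\rep)$ and $m_{\mathrm{ref}}=\mathbb{E}_{\rep\sim\repp_{\mathrm{ref}}}C(\rep)$, this matches $\divg_{\mathrm{Ra}}^{\tilde f}(p\,\|\,q)$ from Proposition~\ref{prop:ra_divg} under the identification $p=\repp_\theta$ and $q=\repp_{\mathrm{ref}}$: the $q$-expectation term carries the argument $C(\rep)-\mathbb{E}_p C$ and the $p$-expectation term carries $\mathbb{E}_q C - C(\rep)$, which is precisely the pattern above.

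It then remains to check that $\tilde f$ meets the hypotheses of Proposition~\ref{prop:ra_divg}. I would verify: $\tilde f(0) = \log 2 - \log 2 = 0$; $\tilde f'(z) = 1-\sigma(z)$, hence $\tilde f$ is differentiable at $0$ with $\tilde f'(0)=\tfrac12\neq 0$; $\tilde f''(z) = -\sigma(z)(1-\sigma(z)) < 0$, so $\tilde f$ is (strictly) concave; and $\tilde f$ is strictly increasing with $\tilde f(z)\to\log 2$ as $z\to+\infty$, so $\sup_z\tilde f(z)=\log 2>0$ while the supremum is approached only through positive arguments, giving $\arg\sup_z\tilde f(z)>0$. (This $\tilde f$ is, up to the constant shift, the standard saturating GAN discriminator loss, a canonical instance covered by the cited result.) Proposition~\ref{prop:ra_divg} applied to $\tilde f$ then gives that $\divg_{\mathrm{Ra}}^{\tilde f}(\repp_\theta\,\|\,\repp_{\mathrm{ref}})$, which equals $\divg_{\mathrm{Ra}}(\repp_\theta\,\|\,\repp_{\mathrm{ref}})$, is a divergence; in particular it is nonnegative and vanishes exactly when $\repp_\theta=\repp_{\mathrm{ref}}$.

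The main obstacle is purely bookkeeping: one must split $\log 4$ in exactly the way that makes $\tilde f(0)=0$ (any other split breaks the normalization), and one must assign $p=\repp_\theta$, $q=\repp_{\mathrm{ref}}$ in this order, since the mean-subtraction sits asymmetrically across the two summands and reversing the roles flips a sign. The $\arg\sup>0$ requirement is slightly atypical because the supremum of $\tilde f$ is attained only in the limit $z\to+\infty$ rather than at a finite point, but this is the same degenerate case already handled in the relativistic-GAN analysis underlying Proposition~\ref{prop:ra_divg}.
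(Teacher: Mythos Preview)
Your proof is correct and follows essentially the same approach as the paper's own proof: both define the generator $f(z)=\log\sigma(z)+\log 2$ (your $\tilde f$ is exactly this function), verify the hypotheses of Proposition~\ref{prop:ra_divg} for it, and use the identity $1-\sigma(z)=\sigma(-z)$ together with the split $\log 4=2\log 2$ to identify $\divg_{\mathrm{Ra}}$ with $\divg_{\mathrm{Ra}}^{f}$. Your bookkeeping remarks on the asymmetric role of $p=\repp_\theta$, $q=\repp_{\mathrm{ref}}$ and on the supremum being attained only at $+\infty$ are accurate and match the paper's handling.
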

\begin{proof}
    Define function $f:\mathbb{R}\to \mathbb{R}$ as
    \begin{align}
        f(x)=\log \sigma (x) + \log 2.
    \end{align}
    We can see that $\divg_{\mathrm{Ra}}=\divg^f_{\mathrm{Ra}}$ as follows.

    First, it is straightforward to verify that $f$ is strictly concave; everywhere differentiable; $f(0)=-\log 2 + \log 2 = 0$; $f'(0)=\tfrac{1}{2}\neq =0$; $\sup_x f(x)=\log 2>0; \arg\sup_x f(x)=+\infty >0$. Thus, $f$ satisfies the conditions specified by Proposition~\ref{prop:ra_divg}. 
    
    To conclude the proof, we use identity $1-\sigma(x)=\sigma(-x)$, and replace the log-sigmoid function by $f$:
    \begin{align}
         \divg_{\mathrm{Ra}}(\repp_{\theta} \, \| \,\repp_{\mathrm{ref}}) &:=\sup_{C:\mathcal H \to \mathbb R}
\Big[
\mathbb{E}_{\rep_{\mathrm{ref}}}\big[\log \sigma\big(C(\rep) - m_\theta\big)\big] +\mathbb{E}_{\rep_{\theta}}\big[\log\big(1-\sigma\big(C(\rep) - m_{\mathrm{ref}}\big)\big)\big] \Big]+\log 4.\\
&= \sup_{C:\mathcal H \to \mathbb R}
\Big[
\mathbb{E}_{\rep_{\mathrm{ref}}}\big[\log \sigma\big(C(\rep) - m_\theta\big)\big] +\log 2+\mathbb{E}_{\rep_{\theta}}\big[\log\sigma\big(m_{\mathrm{ref}}-C(\rep)\big)\big)\big]+\log 2 \Big]\\
&=\sup_{C:\mathcal H \to \mathbb R}
\Big[
\mathbb{E}_{\rep_{\mathrm{ref}}}\big[f\big(C(\rep) - m_\theta\big)\big] +\mathbb{E}_{\rep_{\theta}}\big[f\big(m_{\mathrm{ref}}-C(\rep)\big)\big)\big] \Big]\\
&= \divg^f_{\mathrm{Ra}}(\repp_{\theta} \, \| \,\repp_{\mathrm{ref}}).
    \end{align}
    Therefore, by Proposition~\ref{prop:ra_divg}, $\divg_{\mathrm{Ra}}(\repp_{\theta} \, \| \,\repp_{\mathrm{ref}})$ is a well-defined divergence.
\end{proof}

\section{Implementation Details and Hyperparameters}\label{app:hyper}
\paragraph{Training details.} For the preference optimization, we use a batch size of 128 and train the models for 1 epoch. Additionally, we set the max sequence length to be 2048 and apply a cosine learning rate schedule with 10\% warmup steps on the preference optimization dataset. As for the learning rates of the generator, for Gemma2-2B-it experiments, we use the learning rate of $5.0e^{-7}$; for Llama-3-8B-Ins experiments, we use the learning rate of $1e^{-6}$. For training the discriminators, we use half of the learning rate of the generator for the discriminator training. Further, we set adversarial
weight $\lambda=1$ and moving average decay rate $\alpha=0.9$ for all experiments. For DPO training, we use $\beta=0.1$ for all experiments; for SimPO training, we follow the setup as in~\citet{meng2024simpo}. For the generation stage, we use a temperature of 0.7 for the Gemma2-2B-it setting and a temperature of 0.9 for Llama3-8B-Instruct settings.

\paragraph{Transformer architecture (Figure~\ref{fig:trans-arc}).} We use a transformer architecture (2 layers for Gemma2-2B-it experiments and 4 layers for Llama3-8B-Instruct experiments). The discriminator operates directly on continuous latent representations produced by the policy model. Input hidden states are first projected to a lower-dimensional space via a spectrally normalized linear layer to stabilize adversarial training. A lightweight Transformer encoder with learned positional embeddings then models global and long-range dependencies across the sequence using pre-layer normalization. Sequence-level representations are obtained via masked mean pooling, ensuring robustness to variable-length inputs. Finally, a spectrally normalized MLP head maps the pooled representation to a scalar score. This design enables the discriminator to capture holistic, structural properties of generation trajectories while remaining computationally efficient and stable.

\paragraph{Computation environment.} All the training experiments in this paper were conducted on 2×H200 or 4xA100 GPUs.

\section{Computational Cost Analysis}
Table~\ref{tab:training_cost} shows that GANPO introduces only modest computational overhead compared to its corresponding DPO and SimPO baselines. On Gemma2-2B-it, GANPO (DPO) increases training time by less than 4\% while using identical hardware, and exhibits similar scaling behavior on Llama-3-8B-Instruct. Although GANPO (SimPO) incurs a larger overhead, it remains within the same GPU budget and does not require additional rollout generation or external teacher queries. Overall, these results demonstrate that adversarial regularization in GANPO can be incorporated into standard preference optimization pipelines with small additional cost, making it a practical and scalable alternative to purely offline alignment methods.

\begin{table}[h]
\centering
\small
\setlength{\tabcolsep}{8pt}
\begin{tabular}{lcccc}
\toprule
\textbf{Method} 
& \multicolumn{2}{c}{\textbf{Gemma2-2B-it}} 
& \multicolumn{2}{c}{\textbf{Llama-3-8B-Instruct}} \\
\cmidrule(lr){2-3} \cmidrule(lr){4-5}
& \textbf{Time} & \textbf{GPU} & \textbf{Time} & \textbf{GPU} \\
\midrule
\bf DPO 
& 2h 31m 50s & 4 $\times$ A100 
& 4h 20m 13s & 4 $\times$ A100 \\

\bf GANPO (DPO) 
& 2h 37m 24s & 4 $\times$ A100 
& 4h 34m 08s & 4 $\times$ A100 \\
\midrule
\bf SimPO 
& 2h 11m 22s & 2 $\times$ H200 
& 3h 10m 40s & 4 $\times$ A100 \\

\bf GANPO (SimPO) 
& 3h 17m 26s & 2 $\times$ H200 
& 4h 15m 52s & 4 $\times$ A100 \\
\bottomrule
\end{tabular}
\caption{Training time and hardware comparison between GANPO and its corresponding DPO/SimPO baselines. GANPO introduces only modest computational overhead while providing consistent alignment improvements.}
\label{tab:training_cost}
\end{table}

\section{Additional Empirical Visualizations}
\subsection{Win-Rate versus Response Length}
Figure~\ref{fig:winrate-vs-length} analyzes win rates across different response length buckets. DPO shows a clear degradation in performance as responses become longer, consistent with prior observations that offline preference optimization tends to exploit length-related artifacts. In contrast, GANPO maintains stable and consistently higher win rates for medium and long responses, suggesting that adversarial structural regularization mitigates verbosity bias and improves preference alignment beyond token-level heuristics.

\begin{figure}
    \centering
    \includegraphics[width=0.65\linewidth]{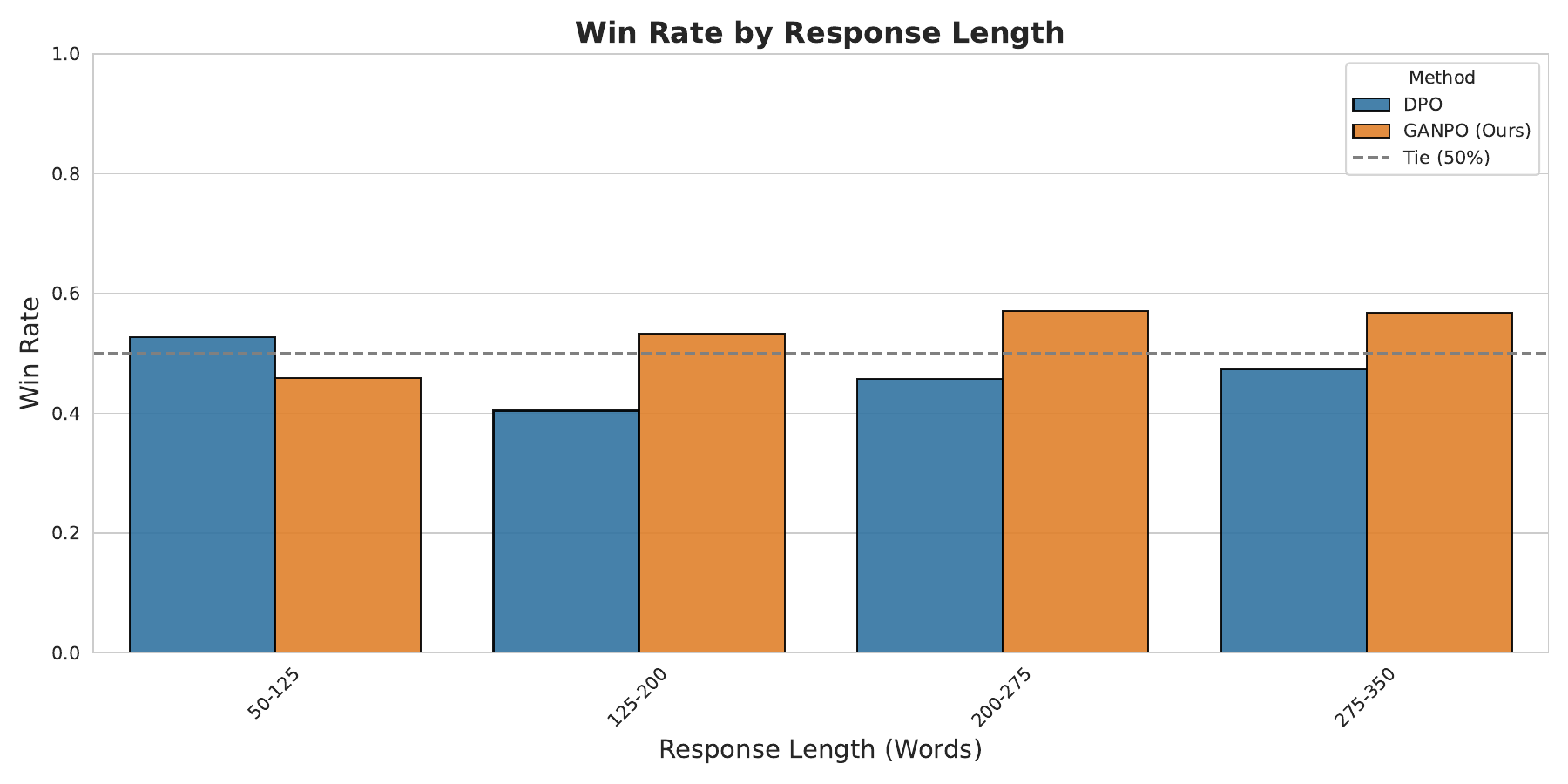}
    \caption{Win rate as a function of response length for DPO and GANPO.}
    \label{fig:winrate-vs-length}
\end{figure}

\subsection{Margins Analysis} Across training, GANPO consistently achieves larger preference margins than both DPO and SimPO. We show it happens for both different models and OOP objectives (Figure~\ref{fig:margins_dpo}, Figure~\ref{fig:margins_simpo}, Figure~\ref{fig:margins_dpo_llama}, and Figure~\ref{fig:margins_simpo_llama}), indicating a clearer separation between preferred and rejected responses. These margins increase steadily over optimization, suggesting more stable and effective preference learning dynamics. In contrast to purely likelihood-based objectives, the adversarial component in GANPO provides structured feedback in latent space, which helps reinforce robust preference separation rather than relying on surface-level token correlations. When combined with SimPO, GANPO also shows margin growth, demonstrating that adversarial structural regularization complements existing offline preference objectives by strengthening latent alignment without destabilizing training.

\begin{figure}
    \centering
    \includegraphics[width=0.65\linewidth]{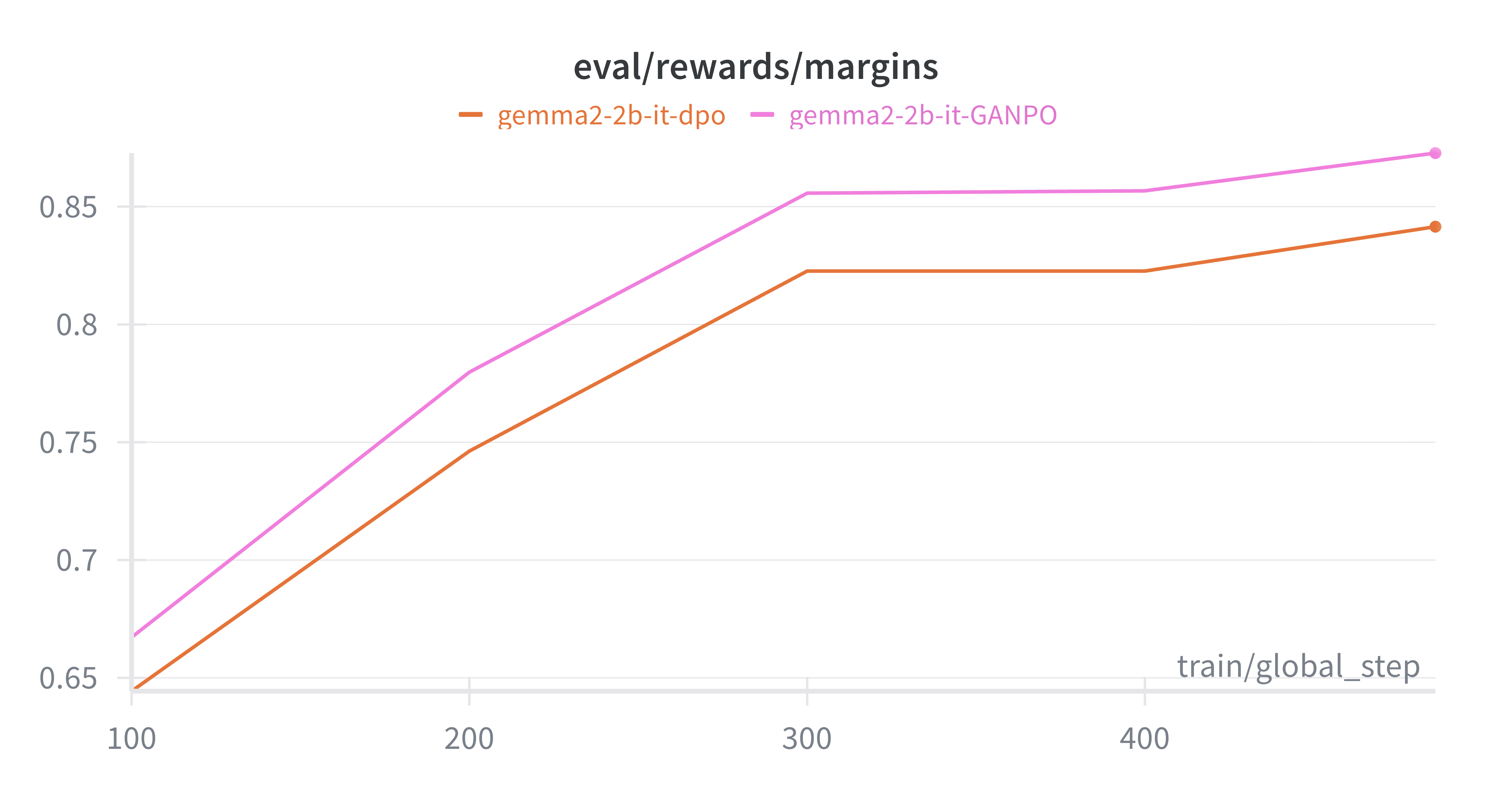}
    \caption{Evolution of reward margins during training on Gemma2-2B-it, comparing DPO and GANPO. }
    \label{fig:margins_dpo}
\end{figure}

\begin{figure}
    \centering
    \includegraphics[width=0.65\linewidth]{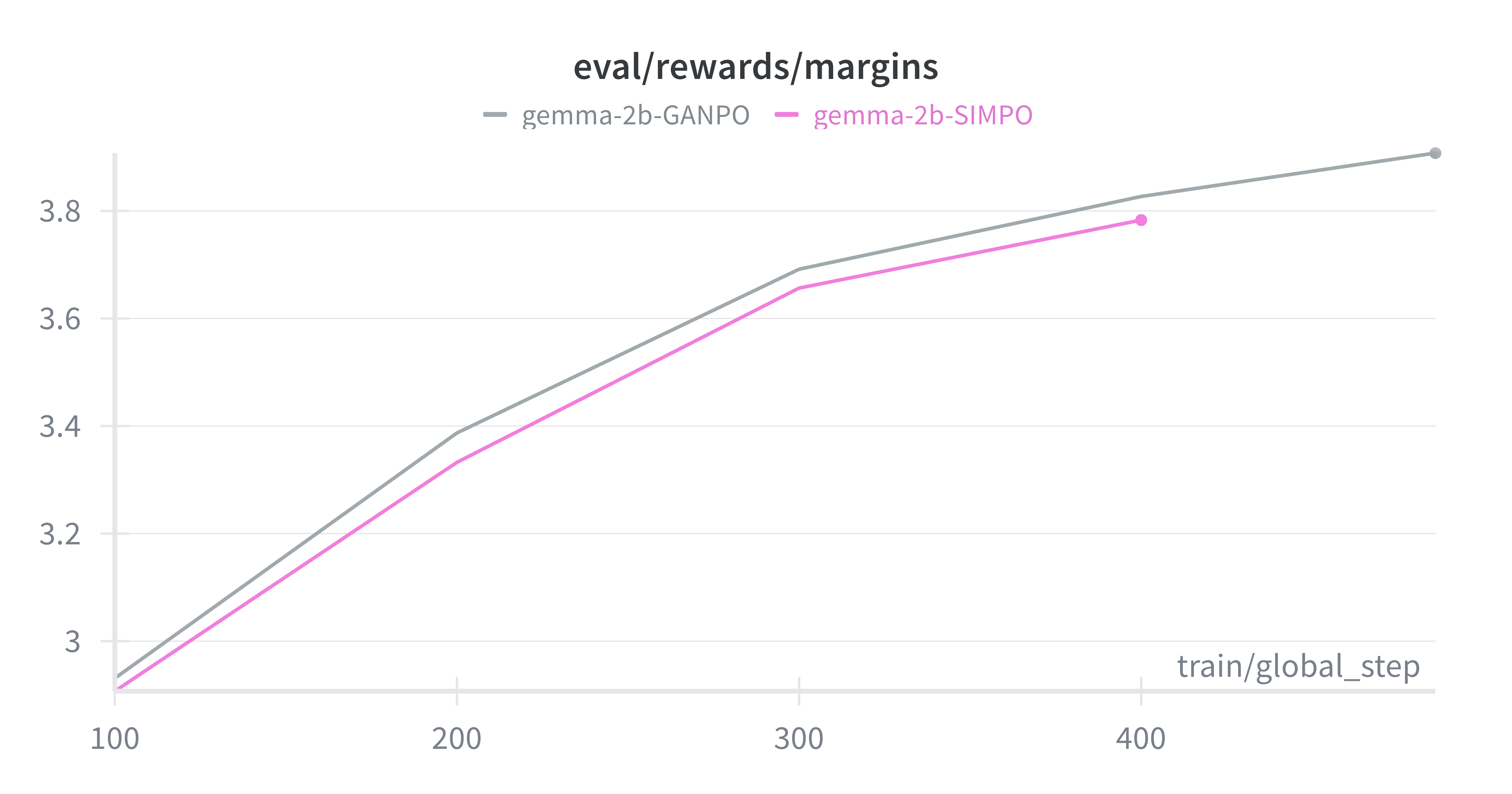}
    \caption{Reward margin comparison between SimPO and GANPO on Gemma2-2B-it. }
    \label{fig:margins_simpo}
\end{figure}

\begin{figure}
    \centering
    \includegraphics[width=0.65\linewidth]{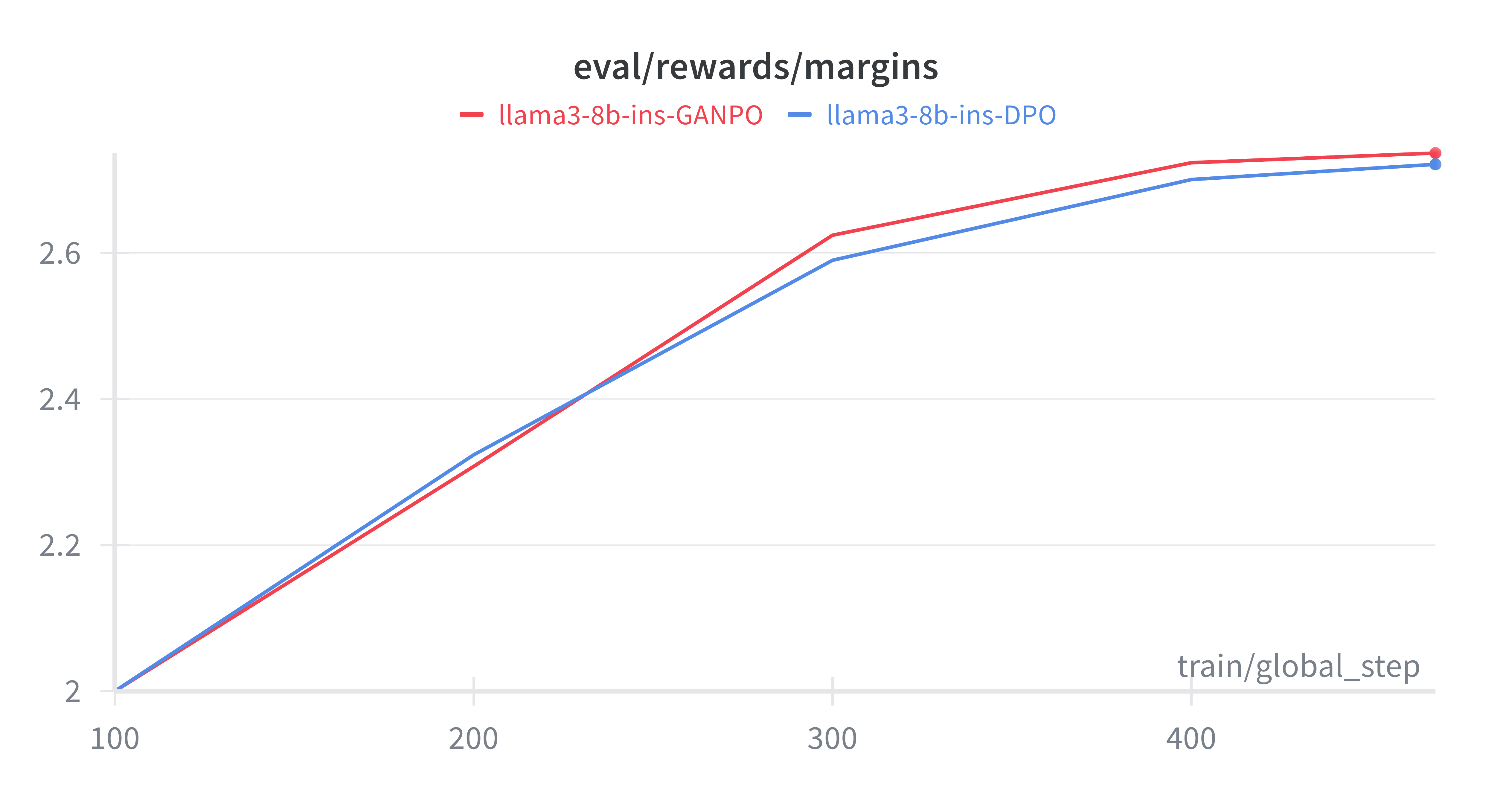}
    \caption{Evolution of reward margins during training on Llama-3-8B-Instruct, comparing DPO and GANPO. }
    \label{fig:margins_dpo_llama}
\end{figure}

\begin{figure}
    \centering
    \includegraphics[width=0.65\linewidth]{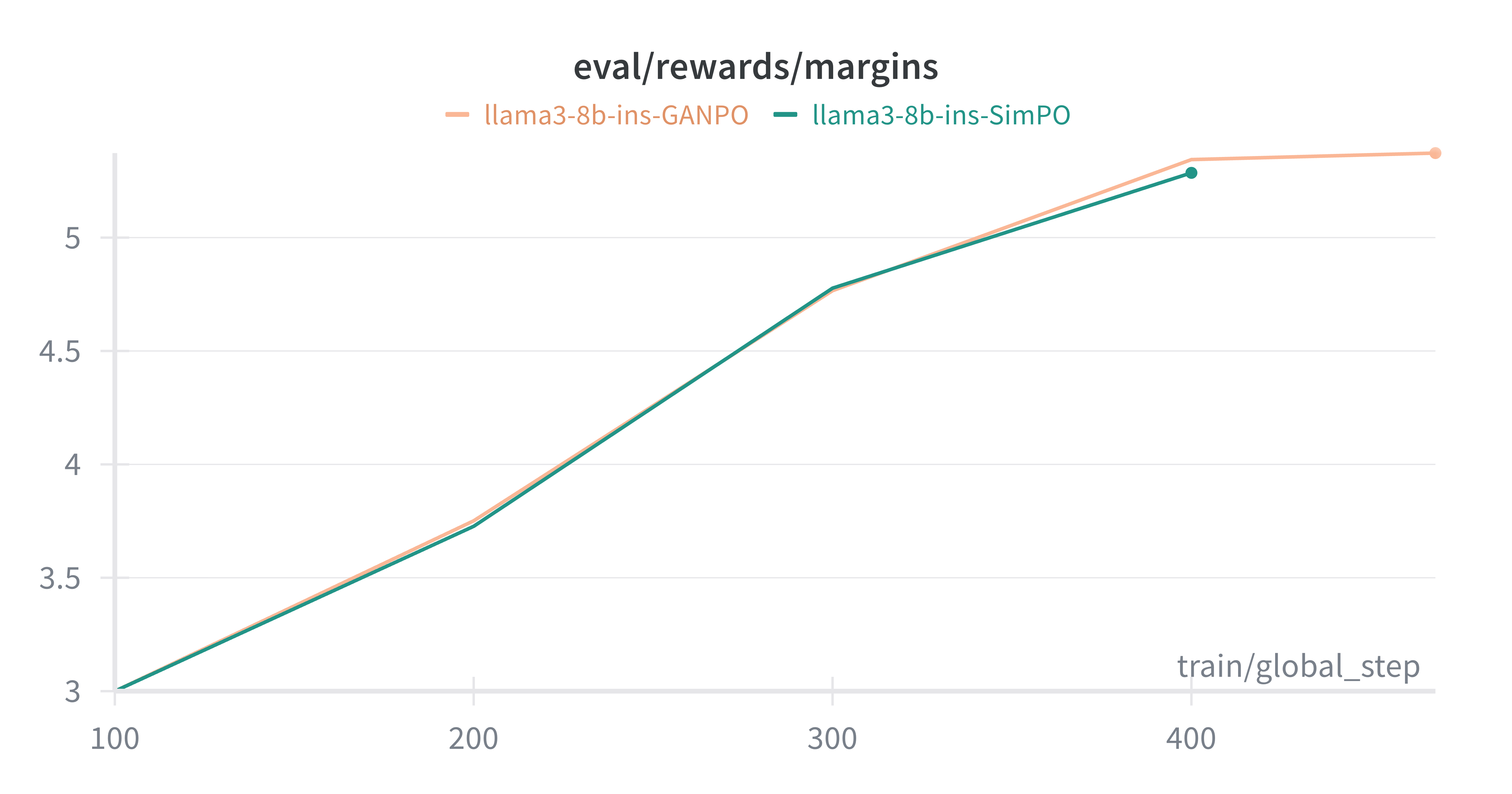}
    \caption{Reward margin comparison between SimPO and GANPO on Llama-3-8B-Instruct. }
    \label{fig:margins_simpo_llama}
\end{figure}

\begin{figure}[h]
\begin{lstlisting}[language=Python, basicstyle=\ttfamily\scriptsize, label={lst:discriminator}]
import torch
import torch.nn as nn
from torch.nn.utils import spectral_norm

class TransformerDiscriminator(nn.Module):
    def __init__(
        self, 
        input_dim=4096,     # Dimension of LLM hidden states
        hidden_dim=512,     # Internal dimension for Discriminator
        num_layers=2,       # Shallow Transformer depth
        num_heads=8,        # Number of attention heads
        max_seq_len=2048,   # Max supported sequence length
        dropout=0.1
    ):
        super().__init__()
        
        self.project_in = spectral_norm(nn.Linear(input_dim, hidden_dim))
        
        self.pos_embedding = nn.Parameter(
            torch.randn(1, max_seq_len, hidden_dim) * 0.02
        )
        
        encoder_layer = nn.TransformerEncoderLayer(
            d_model=hidden_dim,
            nhead=num_heads,
            dim_feedforward=hidden_dim * 4,
            dropout=dropout,
            activation='gelu',
            batch_first=True,
            norm_first=True 
        )
        self.transformer = nn.TransformerEncoder(
            encoder_layer, num_layers=num_layers
        )
        
        self.head = nn.Sequential(
            spectral_norm(nn.Linear(hidden_dim, hidden_dim)),
            nn.GELU(),
            spectral_norm(nn.Linear(hidden_dim, 1))
        )
        
        self.apply(self._init_weights)

    def _init_weights(self, m):
        if isinstance(m, nn.Linear):
            torch.nn.init.xavier_uniform_(m.weight)
            if m.bias is not None:
                nn.init.constant_(m.bias, 0)

    def forward(self, hidden_states, mask=None):
        batch_size, seq_len, _ = hidden_states.size()
        
        x = self.project_in(hidden_states) 
        
        if seq_len <= self.pos_embedding.size(1):
            x = x + self.pos_embedding[:, :seq_len, :]
        else:
            x = x + self.pos_embedding[:, :self.pos_embedding.size(1), :]
        
        src_key_padding_mask = None
        if mask is not None:
            src_key_padding_mask = (mask == 0) 
        
        x = self.transformer(x, src_key_padding_mask=src_key_padding_mask)
        
        if mask is not None:
            mask_expanded = mask.unsqueeze(-1).expand_as(x)
            sum_embeddings = torch.sum(x * mask_expanded, dim=1)
            sum_mask = mask.sum(dim=1, keepdim=True).clamp(min=1e-9)
            pooled_output = sum_embeddings / sum_mask
        else:
            pooled_output = x.mean(dim=1) 
            
        return self.head(pooled_output)
\end{lstlisting}
\caption{PyTorch implementation of the Transformer Discriminator.}\label{fig:trans-arc}
\end{figure}

\end{document}